\newtheorem{theorem}{Theorem}
\newtheorem{lemma}{Lemma}
\DeclareMathOperator*{\argmaxA}{arg\,max}
\DeclareMathOperator*{\argminA}{arg\,min}
\newcommand{\norm}[1]{\left\lVert#1\right\rVert}
\begin{document}
\title{Bayesian Non-stationary Linear Bandits for Large-Scale Recommender Systems
}
%
%
\author{
\IEEEauthorblockN{Saeed Ghoorchian, Evgenii Kortukov, and Setareh Maghsudi}
\thanks{The authors are with the Faculty of Mathematics and Natural Sciences, T{\"u}bingen University, 72074 T{\"u}bingen, Germany. S. M. is also with the Fraunhofer Heinrich Herz Institute, Berlin, Germany. 
E-mail: saeed.ghoorchian@uni-tuebingen.de, evgenii.kortukov@student.uni-tuebingen.de, setareh.maghsudi@uni-tuebingen.de
}
}
%
%
%
\maketitle 
%
\begin{abstract}
Taking advantage of contextual information can potentially boost the performance of recommender systems. In the era of big data, such side information often has several dimensions. Thus, developing decision-making algorithms to cope with such a high-dimensional context in real time is essential. That is specifically challenging when the decision-maker has a variety of items to recommend. In addition, changes in items' popularity or users' preferences can hinder the performance of the deployed recommender system due to a lack of robustness to distribution shifts in the environment. In this paper, we build upon the linear contextual multi-armed bandit framework to address this problem. We develop a decision-making policy for a linear bandit problem with high-dimensional feature vectors, a large set of arms, and non-stationary reward-generating processes. Our Thompson sampling-based policy reduces the dimension of feature vectors using random projection and uses exponentially increasing weights to decrease the influence of past observations with time. Our proposed recommender system employs this policy to learn the users' item preferences online while minimizing runtime. We prove a regret bound that scales as a factor of the reduced dimension instead of the original one. To evaluate our proposed recommender system numerically, we apply it to three real-world datasets. The theoretical and numerical results demonstrate the effectiveness of our proposed algorithm in making a trade-off between computational complexity and regret performance compared to the state-of-the-art.
\end{abstract}
{\em Keywords:} 
Recommender systems, decision-making, multi-armed bandit, non-stationary environment, online learning.

\maketitle
\section{Introduction}
\label{sec:intro}
Over the past decade, recommender systems have benefited the economy by guiding decision-makers in different roles, such as service providers, consumers, and producers, toward cost-effective and time-saving actions while retaining the constraints, such as safety, privacy, and quality-of-service satisfaction. Famous examples of success stories include the recommendation systems deployed in online shopping or streaming websites that provide personalized suggestions to the users \cite{Resnick77:RSS, Adomavicius05:TTN, Hu21:RAP}. A widely-used metric to evaluate a recommender system is the returned payoff, measured in terms of the users' responses to recommended items. One well-known example is the Click-Through Rate (CTR). Therefore, the decision-making algorithms driving a recommender system aim at maximizing the payoffs over time \cite{Song16:OLL, Louedec15:AMP, Chi20:PRS}.

Due to the growing demand for online services, recommender systems must serve a large and diverse group of users by providing fast and accurate recommendations from a vast set of available items. To deliver real-time services that match the users' interests, recommender systems take advantage of side information. Thus, building efficient recommender systems becomes challenging in a large-scale scenario with high-dimensional side information and various items \cite{Song16:OLL}. In addition, online recommender systems often face distribution drifts in the environment where they are deployed. For instance, in personalized news recommendations, customer preferences over news can change over time and exhibit various seasonality patterns \cite{Wu19:DEC}. Hence, building robust recommender systems poses a significant challenge due to environmental changes. As the user's interests in items evolve, a learning agent must constantly adapt its decision-making strategy to comply faster with the environmental changes while attempting to keep the runtime as low as possible \cite{Vinagre15:AOO}. Hence, it is imperative to design adaptive and efficient algorithms, in contrast to the traditional offline models where the recommendation engine has to restart the learning from scratch regularly \cite{Al-Ghossein22:ASO, Guo17:DAF}.

In this paper, we take advantage of an online framework, namely Multi-Armed Bandit (MAB) \cite{Robbins52:SAS}, to build a recommender system and address the efficiency and robustness challenges mentioned above. The seminal MAB problem portrays a finite set of arms and a player. The player sequentially pulls one arm at each decision-making round. Upon pulling an arm, the player receives a random reward produced by an unknown generating process. The goal is to maximize the total accumulated reward over a finite time horizon. The Contextual Multi-Armed Bandit (CMAB) problem is one of the extensions of the seminal MAB problem \cite{Maghsudi17:MAB}. In the CMAB framework, each arm associates with a context vector. At each round of decision-making, the player observes these contexts before selecting an arm.

We consider a CMAB problem with high-dimensional context vectors and a large number of arms whose associated rewards follow a non-stationary linear model; the unknown model parameter can vary in time. The state-of-the-art methods that address such a problem \cite{Abbasi11:IAF,Chu11:CBW,Agrawal13:TSF, Yu17:CBW, Baekjin20:REF} either suffer from excessive computational complexity and weak regret performance, e.g., their regret bound scales as a factor of the context vectors' dimension, or do not take into account the non-stationarity of the environment. To address these shortcomings, we propose a Thompson sampling (TS)-based policy that uses Random Projection (RP) to perform dimensionality reduction, as it is computationally efficient \cite{Fodor02:ASO,Zhang16:ASL,Fern03:RPF}. In addition, our algorithm uses weighted least-squares as an efficient method to estimate the reduced model parameter while gradually forgetting past interactions. 
Our proposed algorithm guarantees an upper regret bound that depends on the reduced dimension instead of the original dimension of context vectors. 
We use three real-world datasets to evaluate our proposed recommender system. Numerical results demonstrate the efficacy of our proposed algorithm in making a trade-off between computational complexity and regret performance in non-stationary environments compared to the state-of-the-art.

%
In the following, we present the problem setting and notations. We then compare our work with state-of-the-art. In Section \ref{sec:decisionmaking}, we propose our decision-making strategy and introduce our algorithm, namely \texttt{D-LinTS-RP}. Section \ref{sec:regretanalysis} includes the theoretical analysis of the regret performance of D-LinTS-RP. Section \ref{sec:numanalysis} is dedicated to numerical evaluation. Section \ref{sec:conclusion} concludes the paper.
\subsection{Problem Setting and Notations}
\label{sec:probfor}
We denote the set of arms by $\mathcal{A} = \{1, 2, \dots, A\}$. For each arm $a \in \mathcal{A}$, $\boldsymbol{x}_{a,t} \in \mathbbm{R}^{n}$ represents its corresponding random context vector at time $t$. 
Let $r_{a,t}$, $\forall a \in \mathcal{A}, \forall t$, represent the random reward corresponding to the arm $a$ at time $t$. The instantaneous rewards of each arm $a$ at each time $t$ are independent random variables drawn from an unknown probability distribution. In this paper, we consider a non-stationary linear bandit model; that is, the reward $r_{a,t}$ for each arm $a \in \mathcal{A}$ is linear with respect to the context vector $\boldsymbol{x}_{a,t}$, and there exists an unknown time-varying parameter vector $\boldsymbol{\theta}_{t}^{\ast} \in \mathbbm{R}^{n}$ such that
\begin{equation}
\label{eq:reward}
r_{a,t} = \boldsymbol{x}_{a,t}^{\top} \boldsymbol{\theta}_{t}^{\ast} + \eta_{t},
\end{equation}
where $\eta_{t}$ is a conditionally $R$-subGaussian zero-mean random noise, where $R \geq 0$ is a fixed constant. 
We assume that $\norm{\boldsymbol{x}_{a,t}}_{2} \leq 1$, $\forall a \in \mathcal{A}$, and $\norm{\boldsymbol{\theta}_{t}^{\ast}}_{2} \leq 1$. Therefore, $\lvert \langle \boldsymbol{\theta}_{t}^{\ast}, \boldsymbol{x}_{a,t} \rangle \rvert \leq 1$. 

The agent's goal is to maximize its total accumulated reward over a finite time horizon $T$. Alternatively, the agent aims to minimize the expected dynamic regret, defined as
\begin{equation}
\label{eq:expectedregret}
 \mathbbm{E}\left[\mathcal{R}(T)\right] = \mathbbm{E}\left[\sum_{t = 1}^{T} \left[  \boldsymbol{x}_{a_{t}^{\ast},t}^{\top} \boldsymbol{\theta}_{t}^{\ast} -  \boldsymbol{x}_{a_{t},t}^{\top} \boldsymbol{\theta}_{t}^{\ast} \right]\right],
\end{equation}
where $a^{\ast}_{t} = \argmaxA\limits_{a \in \mathcal{A}} \boldsymbol{x}_{a,t}^{\top} \boldsymbol{\theta}_{t}^{\ast}$ is the optimal arm at time $t$, and $a_{t}$ denotes the played arm at time $t$ under the applied policy.

By $\boldsymbol{I}_{d \times d}$ and $\boldsymbol{0}_{d}$, we denote an identity matrix of size $d \times d$ and a zero vector of dimension $d$, respectively. $\mu_{\min}(\boldsymbol{Z})$ represents the minimum eigenvalue of a positive definite matrix $\boldsymbol{Z}$. Moreover, for a positive definite matrix $\boldsymbol{Z} \in \mathbbm{R}^{d \times d}$ and any vector $\boldsymbol{y} \in \mathbbm{R}^{d}$, we define the norm $\norm{\boldsymbol{y}}_{\boldsymbol{Z}} = \sqrt{\boldsymbol{y}^{\top} \boldsymbol{Z} \boldsymbol{y}}$.
%
\subsection{Related Works}
\label{subsec:relatedwork}
Online methods such as reinforcement learning and multi-armed bandit algorithms are popular bases to design recommender systems. Some examples include \cite{Shani05:AMB,Munemasa18:DRL,Atan23:DDO,Lihong10:ACB,Deshpande12:LBI,Ghoorchian23:OLC}. The core concept is to design algorithms that balance exploration and exploitation to maximize the total payoff over time. In the context of recommender systems, exploration means learning the payoff of new items by recommending those items to users. Exploitation involves recommending the best item to users using the collected data. Besides exploration-exploitation balance, another important criterion is to maximize the total reward while keeping the runtime as low as possible. That results in faster services, and thereby a higher users' satisfaction level.

The contextual bandit framework serves as a conventional model to formalize and solve recommendation problems. Some recent works include \cite{Tang13:AAF,Deshpande12:LBI,Mahadik20:FDB, Korda16:DCO,Li19:IAO,Li16:CFB}. Despite being designed to solve large-scale problems, the performance of the state-of-the-art methods depends strongly on the number of items and the dimension of the context vectors. For instance, in \cite{Deshpande12:LBI}, the authors consider the linear contextual bandit problem and propose the \texttt{BallExplore} algorithm to model and solve a recommendation problem with high-dimensional context vectors. They prove a regret bound that is proportional to the original dimension of context vectors. Also, the proposed algorithm runs in quadratic time regarding the original dimension $n$. As another example, in \cite{Lihong10:ACB}, the authors propose an algorithm for personalized news article recommendation that also runs in quadratic time w.r.t the original dimension $n$. In contrast, the time complexity of our proposed algorithm is linear concerning the original dimension of contextual data. 

Other recent works investigate the high-dimensional CMAB problem. Some of these approaches achieve significant improvement for the regret bounds; nonetheless, they require additional knowledge or assumptions about the characteristics of the context vectors. For example, in \cite{Abbasi12:OTC}, the authors consider a sparse linear bandit problem and propose an Upper Confidence Bound (UCB)-based policy that uses the algorithm developed in \cite{Gerchinovitz13:SRB} as a subroutine. They establish an upper regret bound of order $\tilde{O}(\sqrt{n S T})$, where $S$ is the maximum number of non-zero components in the context vector. Furthermore, the authors in \cite{Carpentier12:BTM} propose a policy which achieves a regret bound of order $O(S \sqrt{T})$. The development and the analysis are based on the assumption that the set of context vectors is the unit ball in $\mathbbm{R}^{n}$. In comparison with the research works described above, the authors in \cite{Bastani19:ODM} make several additional assumptions, e.g., on the expected covariance matrix of the samples and on the distribution of the context vectors. In return, their proposed policy achieves a regret bound of order $O(S^{2}[\log{(T)} + \log{(n)}]^{2})$. Besides, in \cite{Kim19:DRL}, the authors study the high-dimensional linear contextual bandit problem assuming that the set of contexts are sparse; i.e., only a subset of contexts is correlated with the reward. The proposed algorithm achieves a regret bound that scales logarithmically with the original dimension $n$. 
Further, \cite{Bouneffouf17:CAB} uses a combinatorial bandit algorithm as a subroutine to select $d$ entries of context vectors out of $n$, thereby reducing the dimension of each context vector at each decision-making round. The reduced context vectors are used to update the posterior distribution on the reward parameter. This work does not provide any theoretical analysis for the regret bound.

Our work extends the state-of-the-art research in the area of contextual bandits. In the following, we first review notably-related research works on stationary bandits and highlight the novelty of our approach. We then continue with reviewing the related works on non-stationary bandits. In \cite{Chu11:CBW}, the authors study the CMAB problem with linear payoffs. They propose a UCB-based algorithm, namely \texttt{LinUCB}, that achieves a regret bound of order $O(\sqrt{T n \ln^{3}(A T \frac{\ln(T)}{\delta})})$. Likewise, in \cite{Auer02:UCB}, the authors develop the decision-making policy \texttt{LinRel} that achieves a regret bound similar to that in \cite{Chu11:CBW}. Reference \cite{Abbasi11:IAF} proposes \texttt{OFUL}, a UCB-based algorithm that achieves a regret bound of order $O(n \log(T) \sqrt{T} + \sqrt{n T \log(\frac{T}{\delta})})$. In \cite{Agrawal13:TSF}, the authors utilize Thompson sampling to develop \texttt{LinTS} algorithm with a regret bound of order $O(n \sqrt{T} (\min\{\sqrt{n}, \sqrt{\ln(A)}\}) (\ln{(T)} + \sqrt{\ln{(T)}\ln{(\frac{1}{\delta})}}))$. In \cite{Yu17:CBW}, the authors propose a UCB-based algorithm \texttt{CBRAP} by using the random projection in combination with a UCB-based algorithm developed in \cite{Abbasi11:IAF}. The aforementioned algorithms either are not suitable for large-scale problems, i.e., they show poor regret or runtime performance in large-scale scenarios, or do not take into account the non-stationarity of the environment.



Real-world recommender systems often serve users whose preferences evolve over time. A recent line of research on linear contextual bandits is devoted to designing algorithms capable of handling this non-stationarity in the environments \cite{Cheung19:LTO, Russac19:WLB, Zhao20:ASA, Baekjin20:REF}. For example, in \cite{Cheung19:LTO}, the authors study linear stochastic bandit in a drifting environment. They propose \texttt{SW-UCB} algorithm that uses a sliding window to estimate the unknown parameter of the linear bandit and achieves a regret bound of order $\tilde{O}(n^{2/3}(B_T + 1)^{1/3}T^{2/3})$, where $B_T$ is the variation budget on the unknown parameter vector. Reference \cite{Russac19:WLB} examines the same problem in both slowly-varying and abruptly-changing environments. The authors propose \texttt{D-LinUCB}, a UCB-based algorithm that uses exponentially increasing weights to gradually forget past observations and achieves a regret bound of order $O(n^{2/3}B_{T}^{1/3}T^{2/3})$. In \cite{Zhao20:ASA}, the authors show that a simple strategy based on periodically restarting a UCB-style algorithm is sufficient to achieve the same performance in terms of regret. In \cite{Baekjin20:REF}, two perturbation approaches based on LinUCB and LinTS algorithms are developed to address the non-stationary stochastic linear bandit problem. The proposed algorithms, namely \texttt{D-RandLinUCB} and \texttt{D-LinTS}, achieve regret bounds of order $O(n^{2/3}B_{T}^{1/3}T^{2/3})$ and $O(n^{2/3}(\log{A})^{1/3}B_{T}^{1/3}T^{2/3})$, respectively. 
The aforementioned algorithms rely on the original context vectors; thus, they suffer high computational costs in large-scale scenarios. In contrast, our algorithm enjoys a regret bound that scales as a factor of a reduced dimension $d < n$ while it adapts to drifts in the environment.

In the following section, we describe our proposed decision-making strategy to minimize the expected dynamic regret defined in (\ref{eq:expectedregret}).
\section{Decision-Making Strategy}
\label{sec:decisionmaking}
%
As mentioned before, the agent's goal is to minimize the expected dynamic regret (\ref{eq:expectedregret}) via learning the unknown model parameter $\boldsymbol{\theta}_{t}^{\ast}$ from history up to
time $t-1$, $\mathcal{H}_{t-1} = \{\boldsymbol{x}_{a_{\tau},\tau}, r_{a_{\tau},\tau}\}_{\tau = 1}^{t-1}$. As discussed above, working with high-dimensional data points affects the runtime and regret performance of bandit algorithms. Our proposed decision-making strategy alleviates this effect by reducing the dimension of each context vector using the RP method. More specifically, we project the data points in the original space $\mathbbm{R}^{n}$ to a random lower-dimensional space $\mathbbm{R}^{d}$, $d < n$, using a randomly designed projection matrix $\boldsymbol{P} \in \mathbbm{R}^{d \times n}$ whose columns are scaled to have unit length. It is common to design the matrix $\boldsymbol{P}$ such that each entry of $\boldsymbol{P}$ is a realization of independent and identically distributed (i.i.d.) zero-mean variables with Gaussian distribution \cite{Blum05:RPM}. Therefore, at time $t$, $\boldsymbol{z}_{a,t} = \boldsymbol{P} \boldsymbol{x}_{a,t}, \forall a \in \mathcal{A}$.
%
%

As we are now working in the lower-dimensional space $\mathbbm{R}^{d}$, the player's goal is to learn the unknown parameter $\boldsymbol{\psi}_{t}^{\ast} = \boldsymbol{P} \boldsymbol{\theta}_{t}^{\ast}$, from history up to
time $t-1$, $\mathcal{H}_{t-1}^{\prime} = \{\boldsymbol{z}_{a_{\tau},\tau}, r_{a_{\tau},\tau}\}_{\tau = 1}^{t-1}$. This means that, based on our model and solution, the player does not have access to the full context vectors and can only observe the $d$-dimensional vectors $\boldsymbol{z}_{a,t}$, $\forall a, t$. To learn the unknown parameter $\boldsymbol{\psi}_{t}^{\ast}$, we rely on the weighted $l^{2}$-regularized least-squares estimator with discount factor $\gamma \in (0,1)$. Formally, the estimated parameter $\hat{\boldsymbol{\psi}}_{t}$ at time $t$ is obtained as
%
\begin{align}
\label{eq:optimization}
    \hat{\boldsymbol{\psi}}_{t} = \argminA_{\boldsymbol{\psi} \in \mathbbm{R}^{d}} \left( \sum_{\tau=1}^{t} \gamma^{t-\tau} \left(r_{a_{t},t} - \boldsymbol{\psi}^{\top} \boldsymbol{z}_{a_{\tau},\tau} \right)^{2} + \frac{\lambda}{2} \norm{\boldsymbol{\psi}}_{2}^{2} \right).
\end{align}
where $\lambda > 0$ is a regularization parameter. At each time $t$, The closed form of the weighted least-squares estimator can be calculated as $\boldsymbol{\hat{\psi}}_{t} = \boldsymbol{Z}_{t}^{-1} \boldsymbol{b}_{t}$, where $\boldsymbol{Z}_{t} = \sum_{\tau = 1}^{t-1} \gamma^{-\tau} \boldsymbol{z}_{a_{\tau},\tau} \boldsymbol{z}_{a_{\tau},\tau}^{\top} + \lambda \gamma^{-(t-1)} \boldsymbol{I}_{d \times d}$ and $\boldsymbol{b}_{t} = \sum_{\tau = 1}^{t-1} \gamma^{-\tau} r_{a_{\tau},\tau} \boldsymbol{z}_{a_{\tau},\tau}$.
%
%
In addition, at each time $t$, we define $\tilde{\boldsymbol{Z}}_{t} = \sum_{\tau = 1}^{t-1} \gamma^{-2 \tau} \boldsymbol{z}_{a_{\tau},\tau} \boldsymbol{z}_{a_{\tau},\tau}^{\top} + \lambda \gamma^{-2(t-1)} \boldsymbol{I}_{d \times d}$.
%
%
%

Our proposed decision-making strategy, D-LinTS-RP, is summarized in \textbf{Algorithm \ref{Alg:D-LinTS-RP}}. As mentioned before, in the initial phase, D-LinTS-RP constructs the random projection matrix $\boldsymbol{P}$ as a random matrix whose elements are drawn from a normal distribution $\mathcal{N}(0,\kappa^{2})$, where $\kappa$ is a parameter of the algorithm.
At each time $t$, similar to \cite{Baekjin20:REF}, our algorithm perturbs the estimated parameter $\hat{\boldsymbol{\psi}}_{t}$ via a multivariate Gaussian perturbation $\boldsymbol{W} \sim  \mathcal{N}(\boldsymbol{0}_{d}, \xi^{2} \boldsymbol{I}_{d \times d})$, with $\xi$ being a  tunable parameter. 
Afterward, D-LinTS-RP calculates the perturbed estimate $\boldsymbol{\tilde{\psi}}_{t}$ and selects the arm that has the highest value of $\tilde{r}_{a,t} = \boldsymbol{\tilde{\psi}}_{t}^{\top} \boldsymbol{z}_{a,t}$. Finally, it observes the corresponding reward value and updates the model parameter using the reduced context vector of the selected arm and the corresponding reward.

Our randomized algorithm can be efficiently implemented when the set of arms is large. Moreover, D-LinTS-RP adapts to parameter changes by using the discount factor, thereby reducing the influence of past observations with time. The computational complexity of D-LinTS-RP is polynomial w.r.t. the lower dimension $d$. We observe that for a fixed $d$, the computational complexity of D-LinTS-RP scales linearly w.r.t. the original dimension $n$. This is an improvement over the previous methods, such as the works proposed in \cite{Agrawal13:TSF}, \cite{Chu11:CBW}, \cite{Abbasi11:IAF}, and \cite{Deshpande12:LBI}.

\begin{algorithm}[t!]
\caption{D-LinTS-RP: Discounted Linear Thompson Sampling with Random Projection.
}
\label{Alg:D-LinTS-RP}
\begin{algorithmic}
\STATE \textbf{Input}: Parameters $d$, $\kappa$, $\lambda \geq 1$, $\xi > 0$, and $0< \gamma <1$. 
\STATE \textbf{Initialize}: $\boldsymbol{Z}_{1} = \lambda \boldsymbol{I}_{d \times d}$, $\boldsymbol{\tilde{Z}}_{1} = \lambda \boldsymbol{I}_{d \times d}$, and $\boldsymbol{b}_{1} = \boldsymbol{0}_{d}$. 
\FOR{$i = 1, \dots, d$}
\FOR{$j = 1, \dots, n$}
\STATE Generate $g_{i,j} \sim \mathcal{N}(0,\kappa^{2})$ and assign $\boldsymbol{P}[i,j] = g_{i,j}$.
\ENDFOR
\ENDFOR
\FOR{$t = 1, \dots, T$}
\STATE Calculate $\boldsymbol{\hat{\psi}}_{t} = \boldsymbol{Z}_{t}^{-1} \boldsymbol{b}_{t}$.
\STATE Observe the new context vectors $\boldsymbol{x}_{a,t}$, $\forall a \in \mathcal{A}$.
\STATE Calculate $\boldsymbol{z}_{a,t} = \boldsymbol{P} \boldsymbol{x}_{a,t}$, $\forall a \in \mathcal{A}$.
\STATE Calculate $\boldsymbol{\tilde{\psi}}_{t} = \boldsymbol{\hat{\psi}}_{t} + \boldsymbol{Z}_{t}^{-1} \boldsymbol{\tilde{Z}}_{t}^{1/2} \boldsymbol{W}$, where $\boldsymbol{W} \sim \mathcal{N}(\boldsymbol{0}_{d}, \xi^{2} \boldsymbol{I}_{d \times d})$.
\STATE Select arm 
%
$
    a_{t} = \argmaxA_{a \in \mathcal{A}} \boldsymbol{\tilde{\psi}}_{t}^{\top} \boldsymbol{z}_{a,t} 
$
and observe reward $r_{a_{t},t}$.
\STATE Update $\boldsymbol{Z}_{t+1} = \gamma \boldsymbol{Z}_{t} +  \boldsymbol{z}_{a_{t},t} \boldsymbol{z}_{a_{t},t}^{\top} + (1-\gamma)\lambda \boldsymbol{I}_{d \times d}$.
\STATE Update $\boldsymbol{\tilde{Z}}_{t+1} = \gamma^{2} \boldsymbol{\tilde{Z}}_{t} +  \boldsymbol{z}_{a_{t},t} \boldsymbol{z}_{a_{t},t}^{\top} + (1-\gamma^{2})\lambda\boldsymbol{I}_{d \times d}$.
\STATE Update $\boldsymbol{b}_{t+1} = \gamma \boldsymbol{b}_{t} + r_{a_{t},t} \boldsymbol{z}_{a_{t},t}$.
\ENDFOR
\end{algorithmic}
\end{algorithm}

\section{Theoretical Analysis}
\label{sec:regretanalysis}
%
The following theorem states an upper bound on the expected dynamic regret of the decision policy D-LinTS-RP, summarized in Algorithm \ref{Alg:D-LinTS-RP}.
\begin{theorem}
\label{thm:regbound}
Let $\kappa^{2} = \frac{1}{d}$ and $B_{T} = \sum_{t=1}^{T-1} \norm{\boldsymbol{\theta}_{t}^{\ast} - \boldsymbol{\theta}_{t+1}^{\ast}}_{2}$. For any $\delta, \varepsilon \in (0,1)$ and $\lambda \geq 1$, with probability $1 - 2 \exp(- \frac{d \varepsilon^{2}}{8})$, the expected dynamic regret of D-LinTS-RP is upper bounded as 
%
\begin{align}
\label{eq:regbound}
   \mathbbm{E}[\mathcal{R}(T)] = 
   O \left( T {\bigg(} \frac{\log(T) B_{T}}{1-\gamma} + \exp{(-d \varepsilon^{2})} + \varepsilon {\bigg(} 1 + \sqrt{\frac{d}{T} \log(A)} {\bigg)} {\bigg)} \right).
\end{align}
\end{theorem}
\begin{proof}
See Appendix \ref{app:TheoremOneProof}.
\end{proof}
The original dimension $n$ does not appear in our regret bound, which is an improvement over the previous works that directly scale with $n$. Note that although the regret bound (\ref{eq:regbound}) depends on the reduced dimension $d$, choosing a small $d$ does not necessarily reduce the regret as, in this case, the obtained regret bound holds with a low probability. Indeed, choosing $d$ to be too small might even increase the regret due to the excessive information loss. 
Also, as mentioned before, choosing a smaller value of $d$ improves the running time of our proposed algorithm. Therefore, selecting a suitable value for the reduced parameter $d$ is crucial for achieving a low computational complexity while ensuring negligible regret. We elaborate on this trade-off in our numerical analysis in the next section. 
Our algorithm does not require the knowledge of the total variation budget $B_{T}$. However, as we will see in our numerical analysis, it requires a suitable reduced dimension and a tuned discount factor as the input to achieve efficient runtime and regret performance.
\section{Numerical Analysis}
\label{sec:numanalysis}
In this section, we provide more insights into the effects of high-dimensional features and environmental changes on the performance of learning algorithms. Besides, we clarify how our proposed algorithm mitigates the adverse effects on runtime and regret performance by reducing the feature dimensions and adapting to drifts, respectively. We also compare the performance of our algorithm with conventional benchmarks using three real-world datasets. In particular, we study the following issues through numerical experiments: (i) The performance of our proposed decision-making policy compared to benchmark algorithms in terms of runtime, Click-Through-Rate (CTR), and Normalized Discounted Cumulative Gain (NDCG); (ii) the effect of the reduced dimension $d$ on the performance of our algorithm; (iii) the trade-off between computational complexity and regret bound together with the balance found by our algorithm, in particular, in comparison with the theoretical results. 
The source code for our algorithm and experiments in this paper are publicly available \footnote{Source code: \url{https://github.com/saeedghoorchian/D-LinTS-RP.git}}.
%

\textbf{Baselines:}
We compare our algorithm with state-of-the-art context-aware and context-agnostic algorithms. Context-aware benchmarks in our experiment can be divided into four categories. First, we consider \textbf{D-LinTS} \cite{Baekjin20:REF}, which is designed for non-stationary environments and uses the original context vector with dimension $n$ to select arms. Second, we consider \textbf{CBRAP} \cite{Yu17:CBW}, which is designed for bandit problems with high dimensions in stationary environments. Similar to our algorithm, CBRAP can reduce the dimension of original features at each time of play. As a result, we expect that they enjoy lower computational costs compared to other benchmarks. Third, we consider \textbf{LinTS} \cite{Agrawal13:TSF} that is neither designed for changing environments nor high-dimensional features. It utilizes the original context vectors with dimension $n$ to select arms in stationary environments and has a Bayesian approach similar to our algorithm. As the last context-aware benchmark, we consider \textbf{DeepFM} \cite{Guo17:DAF}, which is a state-of-the-art algorithm designed for CTR prediction, a technique widely employed when designing offline recommender systems. This is in contrast to the online nature of our proposed algorithm.

As the context-agnostic benchmark, we choose $\varepsilon$-\textbf{Greedy} \cite{Auer02:FTA}, which is a standard method despite its weakness due to being blind to contextual information. It does not incur a high computational cost as it does not rely on feature observations and works only based on collected rewards. In contrast, LinTS, D-LinTS, and DeepFM always observe all features. Hence, they incur a higher computational cost compared to other benchmarks. We also consider a \textbf{random} policy that selects an action uniformly at random at each time.
\subsection{Settings and Data Preparation}
\label{sec:Datasets}
We evaluate the performance of our algorithm using three real-world datasets. In the following, we introduce each dataset individually and explain the data preparation steps for our experiments. \textbf{Table \ref{Table:datasets}} presents a summary of the datasets used in our experiment.
%

\textbf{MovieLens 10M:}
This dataset contains users' ratings and tag applications applied to a set of movies from the MovieLens website \cite{Harper16:TMD}. 
The ratings have a $5$-star scale, with half-star increments. Thus, the possible values for rating are $0.5, 1, 1.5, \dots, 5$. In our experiment, we select the top $A = 1000$ movies based on the number of ratings given by the users. We form the context vector for each user by using the movies that the user has watched together with the tags he applied to those movies. Afterward, we extract latent context vectors for each arm (movie), using a low-rank matrix factorization with $100$ latent contexts. Then, we represent the context vector of the movie-user pair by concatenating the user and the movie context vectors. The dimension of the final context vectors is $n = 120$.
We generate a user stream by considering only the users that have rated any of the $A = 1000$ movies. We take the users in the order of their appearances in the data. Hence, it is possible that a specific user appears more than once in our experiment. In this case, we sort the appearances of this specific user according to the timestamps. Our experiment with MovieLens dataset contains $2,885$ unique users. At each time, one user from the user stream arrives, the environment reveals the context vectors, and the algorithm needs to recommend one of $A = 1000$ movies to the user. We employ the implicit feedback model to generate rewards for the benchmark algorithms; if there is a rating present in the dataset, this indicates user interest, and we assign a reward equal to $1$. Otherwise, the reward is $0$. Hence, the reward is $0$ for an unwatched movie.

\textbf{Jester:}
It consists of more than $1.7$ million joke ratings on a continuous scale from $-10$ to $10$ for $A = 140$ jokes \cite{Goldberg01:EAC}. 
We extract latent contexts for representing the users and arms (jokes), using a low-rank matrix factorization with $150$ latent contexts. We then concatenate these context vectors to create the context vector of each joke-user pair. Hence, the dimension of the final context vectors is $n = 300$. In the Jester dataset, the time of user-item interaction is unavailable. Hence, to create a user stream, we sample users from the original dataset uniformly at random with replacement. This procedure results in $59,132$ unique users in our experiment. The algorithm recommends a joke to an incoming user and receives a reward of $1$ if the corresponding rating is greater than $0$. If the rating is less than $0$ or no rating for a joke by a user exists, then the algorithm collects a reward equal to $0$. This pre-processing step rests on an assumption that a missing rating corresponds to a user not being interested in a joke. During the creation of the Jester dataset, the jokes were shown to users sequentially. A user not rating a joke means they stopped using the Jester website before seeing it. We interpret this as the user losing interest; thereby, we disincentivize the algorithm from recommending such jokes to the user.

\begin{table}[!t]
\renewcommand{\arraystretch}{1.3}
\renewcommand{\tabcolsep}{1.5mm}
\begin{center}
\caption{Summary of datasets' characteristics.}
\label{Table:datasets}
\resizebox{0.6\textwidth}{!}{
\begin{tabular}{ |c|c|c|c| } 
\hline
\textbf{Dataset} & \textbf{\#Arms ($A$)} & \textbf{\#Features ($n$)}  & \textbf{\#Unique Users}  \\ 
\hline
MovieLens 10M & 1000 & 120 & 2,885 \\ 
\hline
Jester & 140 & 300 & 59,132\\ 
\hline
Amazon Books & 400 & 200 & 7,000\\ 
\hline
\end{tabular}%
}
\end{center}
\end{table}
%

\textbf{Amazon Books:} 
This dataset is a subset of the $2018$ Amazon Review Data \cite{Ni19:JRU} that contains $51,311,621$ book ratings on a discrete scale from $0$ to $5$. The original data spans a period from May $1996$ to October $2018$. As the ratings in the original data are highly sparse, we use a subset of the original rating data from $15$ December $2012$ to $15$ June $2013$ to form a user stream for our experiment. From this subset of ratings, we extract $A = 400$ items that have the most reviews. Then, we pick the $7,000$ most active users (with the most number of rated books) amongst those users that have rated the $400$ items in the experiment. We sample users from this set uniformly at random with replacement to form the final user stream. We extract latent contexts for representing the users and books using a low-rank matrix factorization with latent space dimension $100$. We concatenate these
latent vectors to create the final context vector of each book-user pair with dimension $n = 200$. Similar to Movielens 10M dataset, we consider an implicit feedback recommendation system model. When the algorithm recommends an item to a user, if the rating for this user-item pair is present in the original data, the reward is $1$. Otherwise, the reward is $0$.

Using the aforementioned setup, we create two user streams for each dataset as the validation and evaluation data with $30,000$ and $100,000$ time steps, respectively. We use the validation data to tune the hyperparameters of each benchmark algorithm by performing a grid search. In Appendix \ref{app:AddInfo},
we present a detailed explanation of tuning the parameters and list the tuned parameters of algorithms used in our experiments in \textbf{Table \ref{Table:parameters}}. 
We use the training data to evaluate the algorithms on $T = 100,000$ user appearances in each user stream.
In order to simulate a non-stationary environment during evaluation, we introduce change points at times $\{5000, 10000, 20000, 35000, 50000, 65000, 80000, 90000\}$. At every change point, we cyclically shift the arms backward by one-third of the size of the arms' set. For example, for MovieLens 10M dataset, we shift the arm indices by $333$ at each change point. This means if the algorithm chooses arm $k$ after a change point, it receives the reward it would get from arm $(k + 333 \mod 1000)$ before the change point. This ensures a piece-wise stationary expected reward for each arm throughout the experiment. This way of introducing non-stationarity can correspond to a shift in users' preferences or in items' popularity.

To deploy the DeepFM in an online recommender system setting, we proceed as follows. During an initial exploration phase of length $1000$, arms are chosen uniformly at random. After the exploration phase, we re-train the model from scratch every $1000$ time steps on all the already observed context-reward pairs. After that, we use the DeepFM model to estimate the expected reward of each user-item pair using the given context vector and then choose the arm with the highest estimated reward. After the reward is revealed by the environment, it is saved in the model memory to be used later for re-training the model. This way, we deploy a CTR prediction model in our experiments such that it can make use of newly gathered data over time.
%

We run the algorithms on the evaluation data for each dataset using the aforementioned setup for $5$ repetitions and report the results by averaging over the repetitions. Random projection matrix, if used, stays the same for each repetition. We report the average runtime, the average cumulative reward, and the average cumulative NDCG@5 of each policy. For D-LinTS-RP and CBRAP, we reduce the context dimension to $5\%$, $10\%$, $20\%$, and $50\%$ of the original context dimension to analyze the effect of the reduced dimension $d$ on the algorithms' performance. For the sake of presentation, in \textbf{Table \ref{Table:results}}, we list some selected results corresponding to context-aware benchmarks and reduced dimensions equal to $20\%$ and $50\%$ of the original dimension, and defer the full results to Appendix \ref{app:AddExps}.
All the policies are evaluated on a single compute node with 64 Intel Xeon Gold 6226R CPUs and 64G of RAM.
\begin{table*}[t!]
\renewcommand{\arraystretch}{1.2}
\renewcommand{\tabcolsep}{1.2mm}
    \begin{center}
    \caption{Comparison of cumulative reward, cumulative NDCG@5, and time consumption of different policies corresponding to different datasets and context dimensions. The reported values are averaged over five repetitions.}
    \vspace{2mm}
    \label{Table:results}
    \resizebox{0.98\textwidth}{!}{
    \begin{tabular}{c|c|c|c|c|c}
        \hline
        \bfseries Dataset & \bfseries Policy & \bfseries Context Dimension & \bfseries Runtime (second) & \bfseries Cumulative Reward  & \bfseries Cumulative NDCG@5 \\
        \hline 
        \multirow{7}{*}{MovieLens 10M} & D-LinTS & 120 & 1739.8 & 74771.6 & 48499.1 \\ 
        \cline{2-6} 
        & LinTS & 120 & 1370.9 & 53814.0 & 33572.3 \\
        \cline{2-6} 
        & \multirow{2}{*}{D-LinTS-RP} & 24 & 1109.2 & 70130.0 & 39509.2 \\
        \cline{3-6} 
        & & 60 & 1231.5 & 74294.6 & 44014.6 \\
        \cline{2-6} 
        & \multirow{2}{*}{CBRAP} & 24 & 1917.0 & 57314.0 & 32521.0 \\
        \cline{3-6} 
        & & 60 & 1989.3 & 59425.0 & 34382.5 \\
        \cline{2-6} 
         & DeepFM & 120 & 4174.4 & 25419.8 & 23624.2 \\
        \hline 
        \multirow{7}{*}{Jester} & D-LinTS & 300 & 6800.4 & 44134.6 & 33895.4  \\
        \cline{2-6} 
        & LinTS & 300 & 2769.7 & 36695.2 & 30340.7 \\
        \cline{2-6} 
        & \multirow{2}{*}{D-LinTS-RP} & 60 & 610.6 & 43382.8 & 32116.6 \\
        \cline{3-6} 
        &  & 150 & 1554.3 & 43693.2 & 33030.9 \\
        \cline{2-6} 
        & \multirow{2}{*}{CBRAP} & 60 & 521.3 & 26171.0 & 22018.0 \\
        \cline{3-6} 
        &  & 150 & 673.6 & 25081.0 & 22004.3 \\
        \cline{2-6} 
         & DeepFM & 300 & 3912.7 & 19275.0 & 22513.9 \\
        \hline 
        \multirow{7}{*}{Amazon Books} & D-LinTS & 200 & 2549.3 & 35018.2 & 12929.0 \\
        \cline{2-6} 
        & LinTS & 200 & 1315.6 & 9513.6 & 4594.4\\
        \cline{2-6} 
        & \multirow{2}{*}{D-LinTS-RP}  & 40 & 586.2 & 34118.0 & 12257.4 \\
        \cline{3-6} 
        &  & 100 & 973.2 & 34654.0 & 12603.2 \\
        \cline{2-6} 
        & \multirow{2}{*}{CBRAP} & 40 & 833.8 & 12500.0 & 4948.5 \\
        \cline{3-6} 
        &  & 100 & 918.5 & 5779.0 & 3073.5 \\
        \cline{2-6}
         & DeepFM & 200 & 4377.9 & 5562.8 & 3402.3 \\
        \hline
    \end{tabular}
    }
    \end{center}
\end{table*}

\begin{figure*}[t!]
    \centering
     \begin{subfigure}[t]{1\textwidth}
         \centering
         \includegraphics[width=1\textwidth]{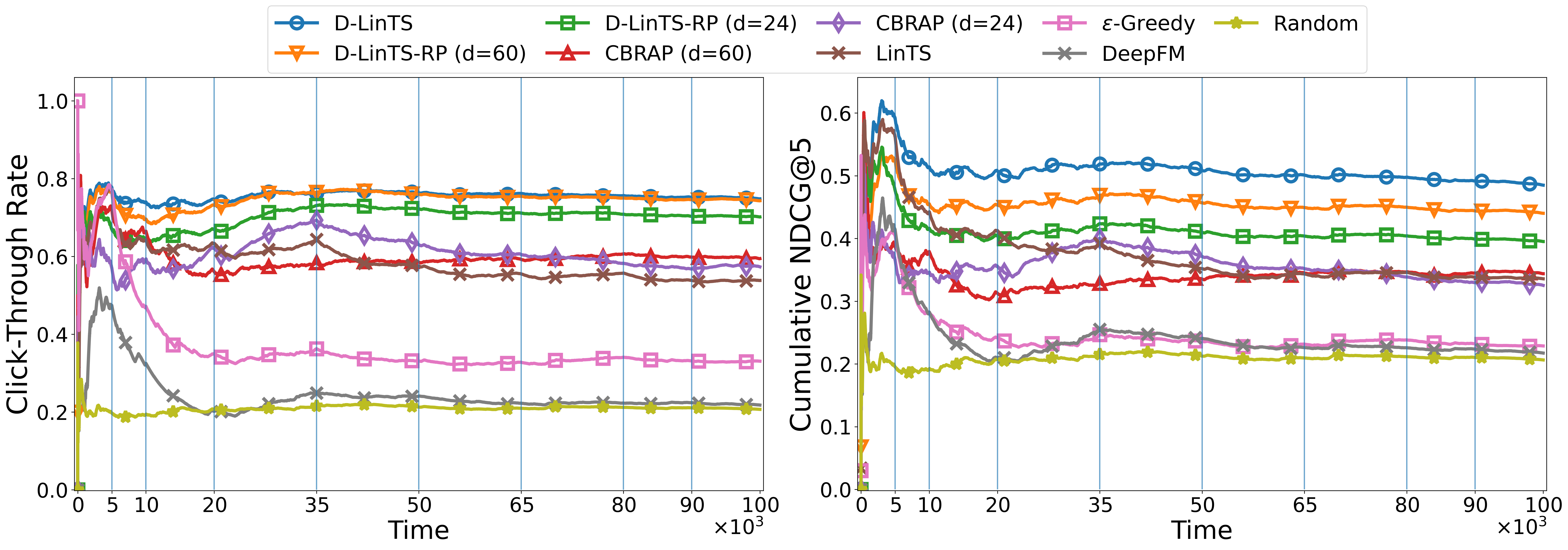} 
         \caption{Results for MovieLens 10M dataset.}
         \vspace{1mm}
         \label{subfig:movielens_ctr_ndcg}
     \end{subfigure}
     \begin{subfigure}[t]{1\textwidth}
         \centering
         \includegraphics[width=1\textwidth]{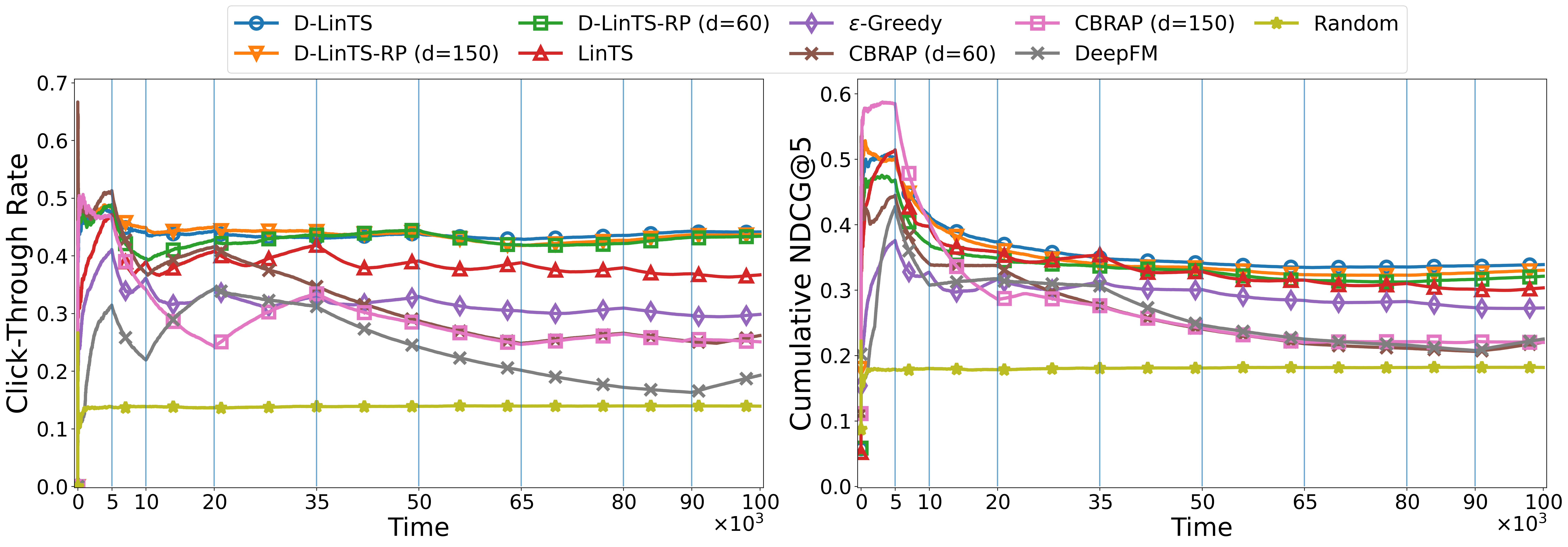} 
         \caption{Results for Jester dataset.}
         \vspace{1mm}
         \label{subfig:jester_ctr_ndcg}
     \end{subfigure}
     \begin{subfigure}[t]{1\textwidth}
         \centering
         \includegraphics[width=1\textwidth]{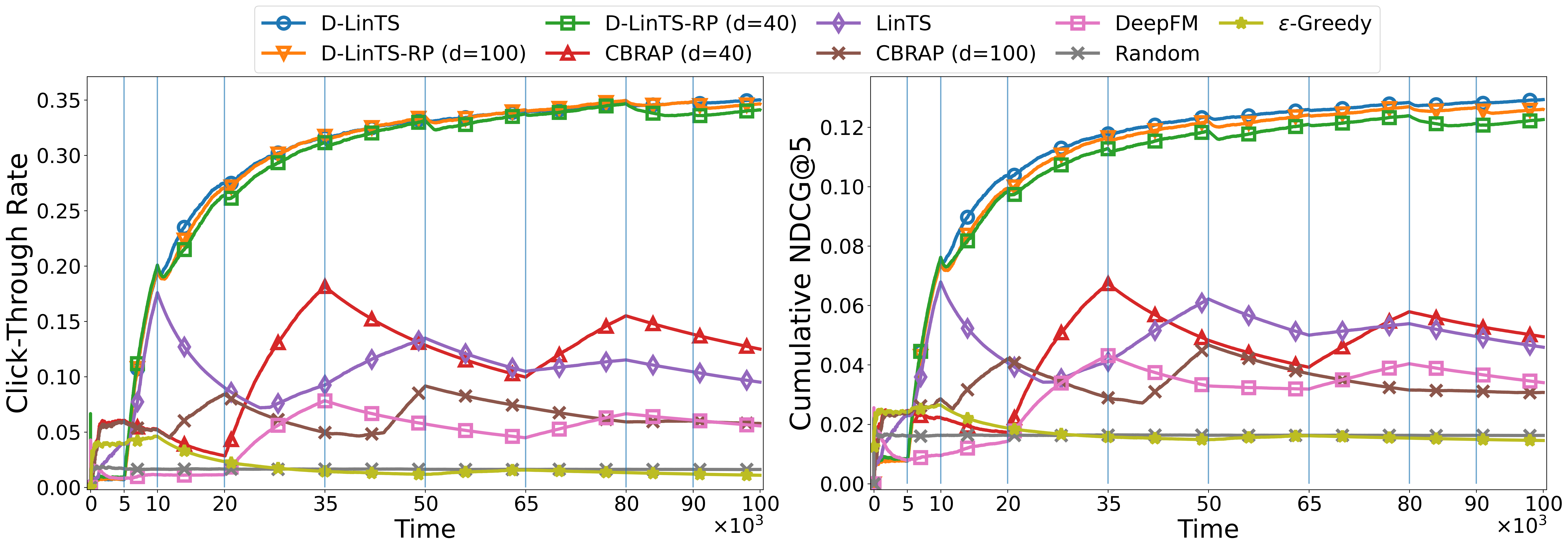} 
         \caption{Results for Amazon Books dataset.}
         \label{subfig:amazon_ctr_ndcg}
     \end{subfigure}
     \\
    \caption{The CTR and NDCG@5 of different policies over time. Vertical lines show the change points.}
    \label{fig:all_datasets_ctr_ndcg}
\end{figure*}
\textbf{CTR and NDCG Comparison:} 
We depict the average CTR and the average cumulative NDCG@5 of different policies for the MovieLens 10M, Jester, and Amazon Books datasets in \textbf{Fig. \ref{subfig:movielens_ctr_ndcg}, \ref{subfig:jester_ctr_ndcg}, \textup{and} \ref{subfig:amazon_ctr_ndcg}}, respectively. The results show the importance of adapting to a non-stationary environment; algorithms that adapt to changes in the reward-generating processes, i.e., D-LinTS and D-LinTS-RP, achieve higher CTR and NDCG than policies that do not recommend items adaptively. As we see, D-LinTS and D-LinTS-RP achieve comparable results; however, D-LinTS-RP uses the reduced context vectors, performing more efficiently in terms of runtime compared to D-LinTS. Note that, DeepFM performs poorly in terms of achieved reward in our experiments. This is due to the fact that DeepFM policy chooses arms based on estimated rewards, effectively doing only exploitation and no exploration. In addition, re-training the model from scratch with newly collected data does not help the DeepFM to improve its performance compared to other online benchmark methods.

The figures depicting the cumulative NDCG@5 provide additional insight into the relative performance of the algorithms. The NDCG metric assesses the ability of the evaluated policies to rank the items. NDCG@5 is defined as $\frac{DCG@5}{IDCG@5}$ where $DCG@5 = \sum_{i=1}^{5}\frac{rel_i}{\log_2{i+1}}$ and $IDCG@5$ is Ideal DCG@5, or the highest achievable DCG@5 for the given collection of items. Moreover, $rel_i$ denotes the true relevance of the item at position $i$, and the items' positions are obtained by sorting the items according to the predicted relevance. In words, NDCG quantifies how well an algorithm can predict the relevance of each item. In our experiments, instantaneous arm rewards are used as true relevance scores for NDCG computation. We adapt the bandit policies to output the best arm and a score for each arm at each decision-making round. The scores are defined based on the decision-making strategy used by an algorithm and lead to a natural ranking of arms induced by that policy. Therefore, we use these scores as predicted relevance. For D-LinTS, LinTS, and D-LinTS-RP, the score of each arm $a \in \mathcal{A}$ is the estimated expected reward $\tilde{r}_{a,t} = \boldsymbol{\tilde{\psi}}_{t}^{\top} \boldsymbol{z}_{a,t}$. For CBRAP, the upper confidence bound is used as the predicted relevance. For DeepFM and $\varepsilon$-Greedy, estimated rewards are used as the predicted relevance. For the random policy, the predicted relevance values are sampled uniformly at random from the $[0,1]$ interval. This approach allows us to assess the ability of the evaluated policies to not only identify the best item to recommend to a user but also to rank a set of relevant items in a manner that accurately reflects their relevance. As can be seen from the figures, D-LinTS-RP, with $50\%$ and $80\%$ reduction in context dimension, exhibits a performance close to that of D-LinTS and outperforms all the other benchmark policies on this ranking metric. As expected, the ability to rank the items gradually diminishes as we consider smaller values for the reduced context dimension.

\textbf{Effect of Reduced Dimension $d$:} To further elaborate on the impact of the reduced dimension $d$, we compare the CTR and NDCG performance of the D-LinTS-RP with various reduced dimensions to those of the D-LinTS algorithm. In \textbf{Fig. \ref{fig:movielens_ctr_ndcg_bcmabrp_dlints}, \ref{fig:jester_ctr_ndcg_bcmabrp_dlints}}, and \textbf{\ref{fig:amazon_ctr_ndcg_bcmabrp_dlints}}, we depict the results for MovieLens 10M, Jester, and Amazon Books datasets, respectively. When we increase the reduced dimension $d$, the performance of D-LinTS-RP approaches that of D-LinTS in terms of CTR and NDCG, with D-LinTS-RP matching D-LinTS eventually by using $d$ equal to the original context dimension. 
Therefore, for large-scale recommender systems with changing users' interests, D-LinTS-RP is a better choice than D-LinTS, provided that we use a suitable reduced dimension as the input to the algorithm.

\begin{figure*}[t!]
    \centering
     \begin{subfigure}[t]{1\textwidth}
         \centering
         \includegraphics[width=1\textwidth]{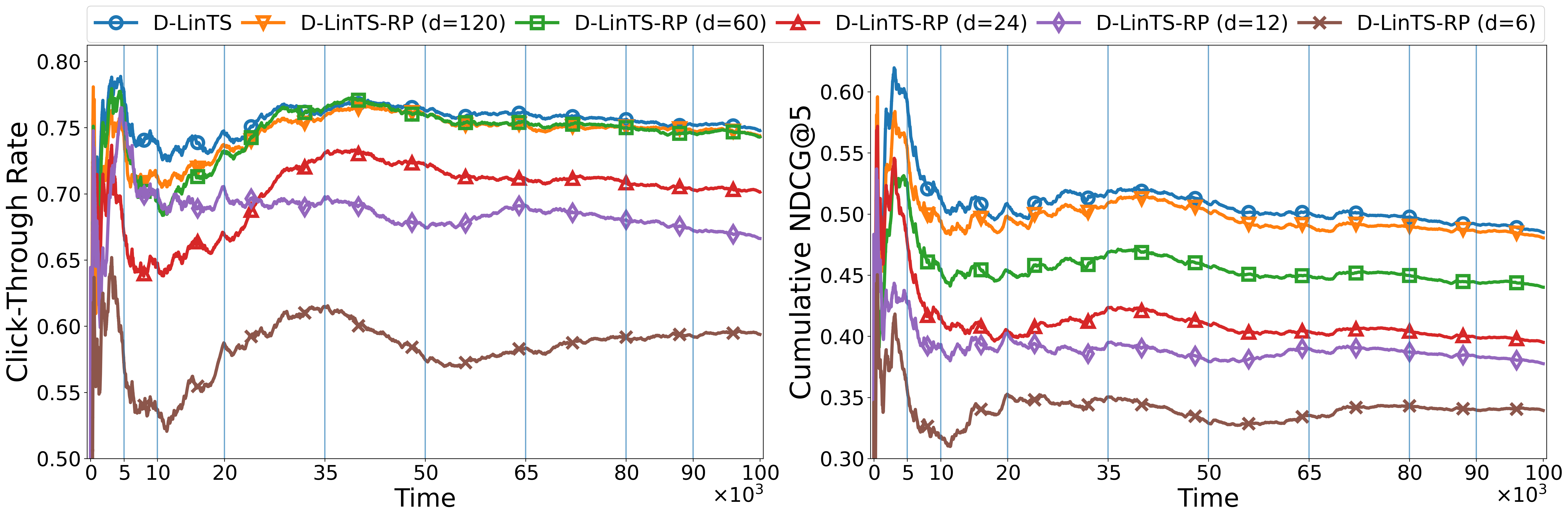} 
         \caption{Results for MovieLens 10M dataset.}
         \vspace{3mm}
\label{fig:movielens_ctr_ndcg_bcmabrp_dlints}
     \end{subfigure}
     \begin{subfigure}[t]{1\textwidth}
         \centering
         \includegraphics[width=1\textwidth]{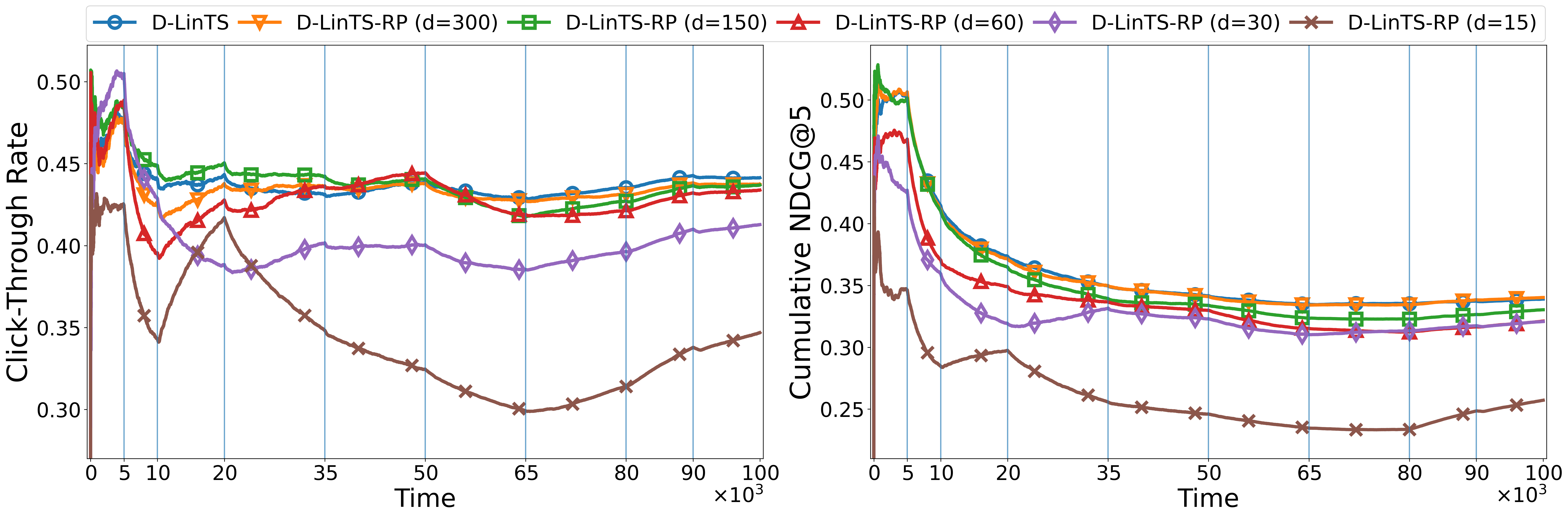} 
         \caption{Results for Jester dataset.}
         \vspace{3mm}
         \label{fig:jester_ctr_ndcg_bcmabrp_dlints}
     \end{subfigure}
     \begin{subfigure}[t]{1\textwidth}
         \centering
         \includegraphics[width=1\textwidth]{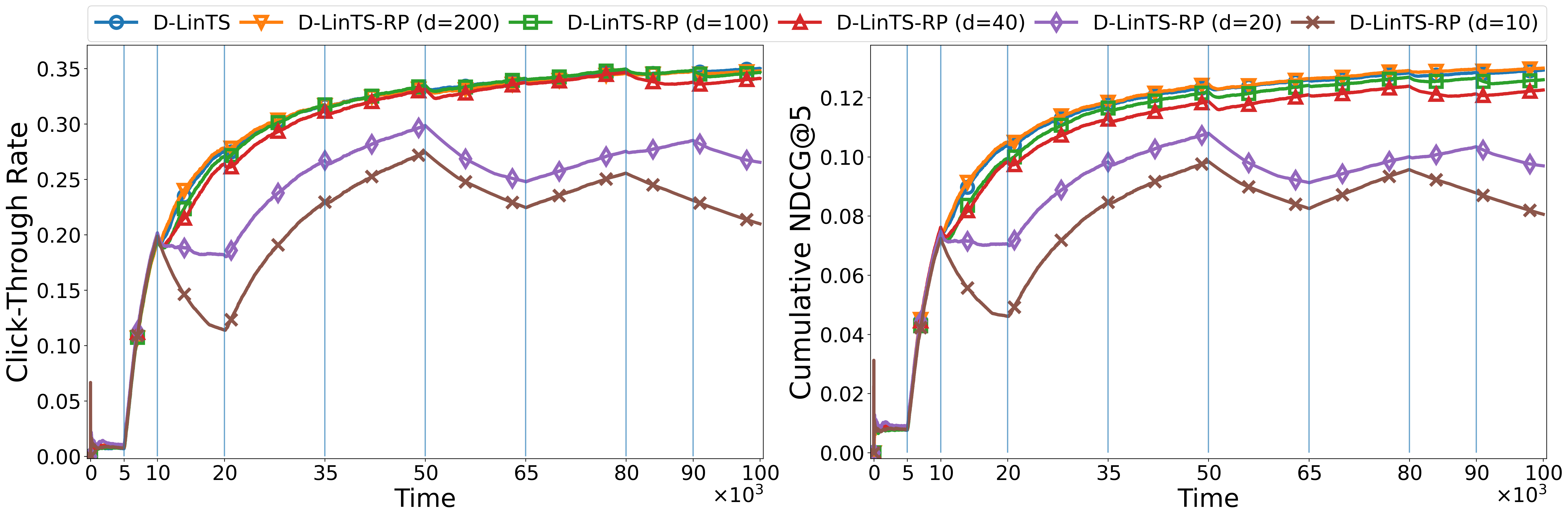} 
         \caption{Results for Amazon Books dataset.}
         \label{fig:amazon_ctr_ndcg_bcmabrp_dlints}
     \end{subfigure}
     \\
    \caption{The CTR and NDCG@5 of D-LinTS and D-LinTS-RP policies over time. Vertical lines show the change points.}
\label{fig:all_datasets_ctr_ndcg_bcmabrp_dlints}
\end{figure*}

As evident from the theoretical and numerical results, the choice of the reduced dimension $d$ impacts the performance of the D-LinTS-RP algorithm. When we increase the reduced dimension $d$, the regret of D-LinTS-RP decreases, and the performance of D-LinTS-RP approaches that of D-LinTS in terms of both runtime and reward. Although larger values of $d$ expand the regret bound, this does not necessarily mean that the achieved cumulative reward will be different in practice. That is the case for our experiment on the Jester dataset, where the cumulative reward is not affected much as we decrease the value of reduced dimension $d$.

\textbf{Trade-off between Computational
Complexity and Regret Bound:}
The discussion above suggests that the reduced dimension $d$ makes a trade-off between the computational complexity of our algorithm and its regret performance. As expected, large $d$ increases the runtime, while choosing a small $d$ might yield information loss, thereby harming the performance w.r.t. the accumulated rewards. However, the results show that there are reduced dimensions using which D-LinTS-RP's runtime drops significantly while the algorithm continues to enjoy a high cumulative reward (See \textbf{Table \ref{Table:results-full}} in Appendix \ref{app:AddExps} for the full results). For example, for the MovieLens dataset, reducing the context dimension to $50\%$ results in just a $0.1\%$ drop in achieved reward while reducing the runtime by $29.7\%$. For the Jester dataset, reducing the context dimension to $20\%$ results in only a $0.8\%$ drop in accumulated reward, while the runtime of the algorithm decreases significantly by $90\%$. Finally, for the Amazon Books dataset, reducing the context dimension to $20\%$ results in a $2\%$ reduction in accumulated reward but leads to a $77\%$ decrease in the runtime.

\section{Conclusion}
\label{sec:conclusion}
We developed a decision-making policy, namely D-LinTS-RP, for the linear CMAB problem that is implementable in recommender systems. D-LinTS-RP is specifically suitable for scenarios with a large set of items, high-dimensional side information, and non-stationary environments. The policy utilizes a weighted least-squares estimator and takes advantage of random projection and exponentially increasing weights to reduce the dimension of the context vectors and the influence of past observations, respectively. We theoretically analyzed D-LinTS-RP and proved an upper regret bound that depends on the reduced dimension of the context vector. For numerical evaluation, we apply D-LinTS-RP on real-world datasets for content recommendation. The results demonstrate its effectiveness in making a trade-off between computational complexity and regret performance in non-stationary environments. Besides developing online content recommender systems, our work fits several real-world application domains, such as edge computing, medical applications, and stock trading.
\section{Appendix}
\label{sec:App}
%
%
%
\subsection{Notations}
\label{sec:notations}
%
At each time $t$, we define 
$\boldsymbol{U}_{t} = \boldsymbol{Z}_{t} \tilde{\boldsymbol{Z}}_{t}^{-1} \boldsymbol{Z}_{t}$ and
%
\begin{align}
    s_{a,t} = \sqrt{\boldsymbol{z}_{a,t}^{\top} \boldsymbol{U}_{t}^{-1} \boldsymbol{z}_{a,t}}, \hspace{10mm} \forall a \in \mathcal{A}.
\end{align}
In addition, at each time $t$, we define
\begin{align}
    \boldsymbol{\bar{\psi}}_{t} = \boldsymbol{Z}_{t}^{-1} (\sum_{\tau=1}^{t-1} \gamma^{-\tau} \boldsymbol{z}_{a_{\tau},\tau} \boldsymbol{z}_{a_{\tau},\tau}^{\top} \boldsymbol{\psi}_{\tau}^{\ast} + \lambda \gamma^{-(t-1)} \boldsymbol{\psi}_{t}^{\ast}).
\end{align}
We define the Event  $E_{\boldsymbol{\hat{\psi}}}$ as
\begin{align}
    E_{\boldsymbol{\hat{\psi}}} = \{\forall a \in \mathcal{A}, \forall t \in [T],  |\boldsymbol{z}_{a,t}^{\top} \boldsymbol{\hat{\psi}}_{t} - \boldsymbol{z}_{a,t}^{\top} \boldsymbol{\bar{\psi}}_{t}| \leq \alpha s_{a,t}\},
\end{align}
where $\alpha = R \sqrt{2\log{T} + d\log{\left(1 + \frac{L_{\boldsymbol{z}}^{2} (1-\gamma^{2T})}{\lambda d (1-\gamma^{2})}\right)} }+ \lambda^{\frac{1}{2}} L_{\boldsymbol{\psi}} + \varepsilon \gamma \sqrt{\frac{ (1-(1/\gamma)^{T})}{\lambda (1-(1/\gamma))}}$. Moreover, we define the Event $E_{\boldsymbol{\tilde{\psi}}}(t)$ as
\begin{align}
    E_{\boldsymbol{\tilde{\psi}}}(t) = \{\forall a \in \mathcal{A}, |\boldsymbol{z}_{a,t}^{\top} \boldsymbol{\tilde{\psi}}_{t} - \boldsymbol{z}_{a,t}^{\top} \boldsymbol{\hat{\psi}}_{t}| \leq \beta s_{a,t}\},
\end{align}
where $\beta = \xi \sqrt{2 \log{(\frac{A T}{2})}}$.
In addition, we define the following Event $E_{\ast}(t)$.
\begin{align}
    E_{\ast}(t) = \{\boldsymbol{z}_{a_{t}^{\ast},t}^{\top} \boldsymbol{\tilde{\psi}}_{t} - \boldsymbol{z}_{a_{t}^{\ast},t}^{\top} \boldsymbol{\hat{\psi}}_{t} > \alpha s_{a_{t}^{\ast},t}\}.
\end{align}
Further, for any $\varepsilon \in (0,1)$, we define the Event $E_{\boldsymbol{\psi}^{\ast}}(t)$ as
\begin{align} \nonumber
    E_{\boldsymbol{\psi}^{\ast}}(t) =
    \{\forall a \in \mathcal{A}, \boldsymbol{z}_{a,t}^{\top} \boldsymbol{\psi}_{t}^{\ast}  > \boldsymbol{x}_{a,t}^{\top} \boldsymbol{\theta}_{t}^{\ast} d \kappa^{2} &- \varepsilon d \kappa^{2} \norm{\boldsymbol{x}_{a,t}}_{2} \norm{\boldsymbol{\theta}_{t}^{\ast}}_{2} \\ 
    &~\&~ \boldsymbol{z}_{a,t}^{\top} \boldsymbol{\psi}_{t}^{\ast}  < \boldsymbol{x}_{a,t}^{\top} \boldsymbol{\theta}_{t}^{\ast} d \kappa^{2} + \varepsilon d \kappa^{2} \norm{\boldsymbol{x}_{a,t}}_{2} \norm{\boldsymbol{\theta}_{t}^{\ast}}_{2}\}.
\end{align}
Let $\norm{\boldsymbol{z}_{a,t}}_{2} \leq L_{\boldsymbol{z}}$, $\forall a \in \mathcal{A}$, and $\norm{\boldsymbol{\psi}_{t}^{\ast}}_{2} \leq L_{\boldsymbol{\psi}}$, for some constants $L_{\boldsymbol{z}}$, $L_{\boldsymbol{\psi}} \geq 1$. Define $\Delta_{a}(t) = \boldsymbol{z}_{a_{t}^{\ast},t}^{\top} \boldsymbol{\bar{\psi}}_{t} - \boldsymbol{z}_{a,t}^{\top} \boldsymbol{\bar{\psi}}_{t}$, $\forall a \in \mathcal{A}$. At each time $t$, we divide the arms into the following two sets: (i) set of saturated arms, and (ii) set of unsaturated arms. 
The set of unsaturated arms includes undersampled arms that are worse than $a_{t}^{\ast}$ given $\bar{\boldsymbol{\psi}}_{t}$ at time $t$, whereas the set of saturated arms includes sufficiently sampled arms that are worse than $a_{t}^{\ast}$ given $\bar{\boldsymbol{\psi}}_{t}$ at time $t$.
Formally, at each time $t$, the set of arms $\mathcal{A}$ is divided into two following sets.
\begin{itemize}
    \item Saturated arms:
    \begin{align}
        C(t) = \{ a \in \mathcal{A} ~|~ \Delta_{a}(t) \geq \gamma s_{a,t} ~\&~ \Delta_{a}(t) \geq 0\},
    \end{align}
    \item Unsaturated arms:
    \begin{align}
        \overline{C}(t) = \{ a \in \mathcal{A} ~|~ \Delta_{a}(t) \leq \gamma s_{a,t} ~\&~ \Delta_{a}(t) \geq 0 \},
    \end{align}
\end{itemize}
where $\gamma = \alpha + \beta$.

Following the scale-invariance property of the weighted least-square estimator, we can equivalently obtain $\hat{\boldsymbol{\psi}}_{t}$ as \cite{Russac19:WLB}
\begin{align}
\label{eq:optimization-standard}
    \hat{\boldsymbol{\psi}}_{t} = \argminA_{\boldsymbol{\psi} \in \mathbbm{R}^{d}} \left( \sum_{\tau=1}^{t} \gamma^{-\tau} \left(r_{a_{t},t} - \boldsymbol{\psi}^{\top} \boldsymbol{z}_{a_{\tau},\tau} \right)^{2} + \gamma^{-t} \frac{\lambda}{2} \norm{\boldsymbol{\psi}}_{2}^{2} \right).
\end{align}
%
Note that, while we used the estimator (\ref{eq:optimization}) in Algorithm \ref{Alg:D-LinTS-RP}, we use the estimator (\ref{eq:optimization-standard}) for its regret analysis in Section \ref{app:TheoremOneProof} below, as we need to apply the concentration results in Lemma \ref{lem:2}. 

Let $\mathcal{F}_{t} = \sigma(a_{1}, \dots, a_{t}, r_{a_{1},1}, r_{a_{t},t})$ be the $\sigma$-algebra generated by the selected arms and their rewards by the end of time $t$. We denote by $\mathbbm{P}_{t}[\cdot]$ and $\mathbbm{E}_{t}[\cdot]$ the conditional probability $\mathbbm{P}[\cdot | \mathcal{F}_{t-1}]$ and the conditional expectation $\mathbbm{E}[\cdot | \mathcal{F}_{t-1}]$, respectively. 
\subsection{Auxiliary Results}
\label{app:aux}
%
\begin{lemma}{\textup{(\cite{abramowitz48:HOM})}} \\
\label{lem:anticoncentration}
For a Gaussian distributed random variable Y with mean $\mu$ and variance $\sigma^{2}$, for any $y > 0$, the following holds.
\begin{equation}
    \frac{1}{4 \sqrt{\pi}} \exp(\frac{-7 y^{2}}{2}) \leq
    \mathbbm{P}(|Y - \mu| > y \sigma)
    \leq \frac{1}{2} \exp(\frac{-y^{2}}{2}).
\end{equation}
\end{lemma}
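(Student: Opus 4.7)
The plan is to reduce the inequality to the standard normal case and then handle the two sides separately by direct estimates on the Gaussian tail integral. Setting $Y = (Z - m)/\sigma$ gives $Y \sim \mathcal{N}(0,1)$ and $\mathbbm{P}(|Z - m| > z\sigma) = \mathbbm{P}(|Y| > z)$, so it suffices to prove both inequalities for the standardised variable. For $z \leq 0$ the probability equals $1$ and both inequalities are immediate, so I would restrict attention to $z > 0$ and use $\mathbbm{P}(|Y| > z) = 2\,\mathbbm{P}(Y > z)$ throughout.

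For the upper bound, I would employ the classical shift trick. Substituting $x = z+u$ inside the tail integral,
\begin{equation*}
\mathbbm{P}(Y > z) \;=\; \frac{1}{\sqrt{2\pi}} \int_{z}^{\infty} e^{-x^{2}/2}\, dx \;=\; \frac{e^{-z^{2}/2}}{\sqrt{2\pi}} \int_{0}^{\infty} e^{-zu - u^{2}/2}\, du \;\leq\; \frac{e^{-z^{2}/2}}{\sqrt{2\pi}}\cdot \sqrt{\pi/2},
\end{equation*}
where the inequality uses $e^{-zu} \leq 1$ for $z \geq 0$ and the final step uses the half-normal constant $\int_{0}^{\infty} e^{-u^{2}/2}\,du = \sqrt{\pi/2}$. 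This yields $\mathbbm{P}(Y > z) \leq \tfrac{1}{2} e^{-z^{2}/2}$, which delivers the upper bound in the lemma after accounting for the factor of two coming from the two-sided event.

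For the lower bound, I would estimate the same tail integral from below on a unit-length window, exploiting the monotone decrease of the Gaussian density on $[z, z+1]$:
\begin{equation*}
\mathbbm{P}(Y > z) \;\geq\; \frac{1}{\sqrt{2\pi}} \int_{z}^{z+1} e^{-x^{2}/2}\, dx \;\geq\; \frac{1}{\sqrt{2\pi}}\, e^{-(z+1)^{2}/2}.
\end{equation*}
To pass to the uniform exponent $e^{-7z^{2}/2}$, I would use an elementary quadratic majorant of the form $(z+1)^{2} \leq \tfrac{7}{2}\, z^{2} + c$ for a fixed constant $c$ (maximising $2z+1 - \tfrac{5}{2}z^{2}$ over $z \geq 0$ gives $c = 7/5$), absorb the factor $e^{-c/2}$ into the prefactor, and double by symmetry. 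A numerical check that $2\,e^{-c/2}/\sqrt{2\pi} \geq 1/(4\sqrt{\pi})$ then completes the argument.

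The main obstacle is not analytic but rather the book-keeping of constants: the values $1/(4\sqrt{\pi})$ and $7/2$ in the lower bound are deliberately loose so that a single inequality holds uniformly in $z > 0$ without a case split between small and large $z$. The delicate step is choosing the quadratic majorant of $(z+1)^{2}$ and the length of the integration window (unit length here) so that the leftover multiplicative constant still dominates $1/(4\sqrt{\pi})$; with the choices above, both inequalities follow by routine calculation.
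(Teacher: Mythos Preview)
The paper does not prove this lemma; it is stated as an auxiliary result cited from \cite{abramowitz48:HOM} (via \cite{Agrawal13:TSF}) and is used only through its lower bound in the proof of Lemma~\ref{lem:3}. So there is no proof in the paper to compare your argument against.

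That said, your upper-bound argument has a genuine gap. You correctly derive $\mathbbm{P}(Y>z)\le \tfrac12 e^{-z^2/2}$ for $z\ge 0$, but then the two-sided event gives $\mathbbm{P}(|Y|>z)=2\,\mathbbm{P}(Y>z)\le e^{-z^2/2}$, not $\tfrac12 e^{-z^2/2}$; the factor of two moves the bound in the wrong direction. The same slip appears in your $z\le 0$ case: there $\mathbbm{P}(|Y|>z)=1$ while $\tfrac12 e^{-z^2/2}\le \tfrac12$, so the upper inequality is certainly not ``immediate.'' In fact the upper bound as written in the lemma is false for small $z$ (e.g.\ $z=0$ or $z=1$), so no argument can recover it without an additional hypothesis such as $z\ge c$ for a suitable constant, or replacing $\tfrac12$ by $1$. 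This is a defect in the statement as quoted, not something your method can fix; the paper only invokes the lower bound, so the issue is harmless for the downstream analysis.

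Your lower-bound argument is sound. The unit-window estimate $\mathbbm{P}(Y>z)\ge \tfrac{1}{\sqrt{2\pi}}e^{-(z+1)^2/2}$ combined with the quadratic majorant and the numerical check on constants does yield $\mathbbm{P}(|Y|>z)\ge \tfrac{1}{4\sqrt\pi}e^{-7z^2/2}$ for all $z\ge 0$. (Your majorant $(z+1)^2\le \tfrac72 z^2+c$ actually gives the stronger exponent $-\tfrac74 z^2$ after dividing by $2$; since $e^{-7z^2/4}\ge e^{-7z^2/2}$ this only helps.)
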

\subsection{Main Results}
\label{app:mainresults}
Before we proceed to the proof of Theorem \ref{thm:regbound}, we prove Lemmas \ref{lem:2}-\ref{lem:4}. 
\begin{lemma}
\label{lem:2}
Consider the linear model for reward variables defined in (\ref{eq:reward}) and the arm selection strategy $a_{t} = \argmaxA_{a \in \mathcal{A}} \boldsymbol{\tilde{\psi}}_{t}^{\top} \boldsymbol{z}_{a,t}$ in Algorithm \ref{Alg:D-LinTS-RP}. At each time $t$, for any $\lambda > 0$ and $\delta, \varepsilon \in (0,1)$, the following holds.
\begin{align}
    \mathbbm{P}(E_{\boldsymbol{\hat{\psi}}}) \geq (1 - \frac{1}{T})(1 - 2\exp(- \frac{d \varepsilon^{2}}{8})).
\end{align}
Moreover, given history $\mathcal{H}_{t-1}^{\prime}$, the following holds.
\begin{align}
\label{eq:lem2-hyp2}
    \mathbbm{P}(E_{\boldsymbol{\tilde{\psi}}}(t)) \geq 1 - \frac{1}{T}.
\end{align}
\end{lemma}
\begin{proof}
We have
%
\begin{align} \nonumber
\label{eq:lem1-1}
    (\boldsymbol{\hat{\psi}}_{t} - \boldsymbol{\bar{\psi}})
    &= \boldsymbol{Z}_{t}^{-1} {\Bigg(} \sum_{\tau = 1}^{t-1}  \gamma^{-\tau} r_{a_{\tau},\tau} \boldsymbol{z}_{a_{\tau},\tau} 
    - \sum_{\tau = 1}^{t-1} \gamma^{-\tau} \boldsymbol{z}_{a_{\tau},\tau} \boldsymbol{z}_{a_{\tau},\tau}^{\top} \boldsymbol{\psi}_{\tau}^{\ast} - \lambda \gamma^{-(t-1)} \boldsymbol{\psi}_{t}^{\ast} {\Bigg)} \\
    &= \boldsymbol{Z}_{t}^{-1} {\Bigg(} \sum_{\tau = 1}^{t-1} \gamma^{-\tau} \boldsymbol{z}_{a_{\tau},\tau} \eta_{t} 
    + \sum_{\tau = 1}^{t-1} \gamma^{-\tau} \boldsymbol{z}_{a_{\tau},\tau} ( \boldsymbol{x}_{a_{\tau},\tau}^{\top} \theta_{\tau}^{\ast} -   \boldsymbol{z}_{a_{\tau},\tau}^{\top} \boldsymbol{\psi}_{\tau}^{\ast}) 
    - \lambda \gamma^{-(t-1)} \boldsymbol{\psi}_{t}^{\ast} {\Bigg)}.
\end{align}
%
Moreover, we observe that \cite{Russac19:WLB}
\begin{align} 
\label{eq:secondpart}
     \lambda \gamma^{-(t-1)} \norm{\boldsymbol{\psi}_{t}^{\ast}}_{\tilde{\boldsymbol{Z}}_{t}^{-1}} \leq \lambda^{\frac{1}{2}} L_{\boldsymbol{\psi}}.
\end{align}
In addition, for any arm $a \in \mathcal{A}$, we have \cite{Kaban15:IBO}
\begin{align}
\label{eq:orgvsred1}
    &\mathbbm{P}(\boldsymbol{z}_{a,t}^{\top} \boldsymbol{\psi}_{t}^{\ast}  < \boldsymbol{x}_{a,t}^{\top} \boldsymbol{\theta}_{t}^{\ast} d \kappa^{2} - \varepsilon d \kappa^{2} \norm{\boldsymbol{x}_{a,t}}_{2} \norm{\boldsymbol{\theta}_{t}^{\ast}}_{2}) < \exp(- \frac{d \varepsilon^{2}}{8}), \\ 
\label{eq:orgvsred2}
    &\mathbbm{P}(\boldsymbol{z}_{a,t}^{\top} \boldsymbol{\psi}_{t}^{\ast}  > \boldsymbol{x}_{a,t}^{\top} \boldsymbol{\theta}_{t}^{\ast} d \kappa^{2} + \varepsilon d \kappa^{2} \norm{\boldsymbol{x}_{a,t}}_{2} \norm{\boldsymbol{\theta}_{t}^{\ast}}_{2}) < \exp(- \frac{d \varepsilon^{2}}{8}).
\end{align}
Based on our assumption on the construction of the random matrix $\boldsymbol{P}$, it holds $d \kappa^{2} = 1$. Moreover, we have $\norm{\boldsymbol{x}_{a,t}}_{2} \leq 1$ and $\norm{\boldsymbol{\theta}_{t}^{\ast}}_{2} \leq 1$. Therefore, based on (\ref{eq:orgvsred1}) and (\ref{eq:orgvsred2}), with probability at least $1 - 2\exp(- \frac{d \varepsilon^{2}}{8})$, the following holds.
\begin{align} \nonumber
\label{eq:thirdpart}
    \norm{\sum_{\tau = 1}^{t-1} \gamma^{-\tau} \boldsymbol{z}_{a_{\tau},\tau} ( \boldsymbol{x}_{a_{\tau},\tau}^{\top} \theta_{\tau}^{\ast} - \boldsymbol{z}_{a_{\tau},\tau}^{\top} \boldsymbol{\psi}_{\tau}^{\ast} )}_{\tilde{\boldsymbol{Z}}_{t}^{-1}} 
    &\leq \frac{\varepsilon}{\sqrt{\lambda} \gamma^{-(t-1)}} \norm{\sum_{\tau = 1}^{t-1} \gamma^{-\tau} }_{2} \\
    &\leq \varepsilon \gamma \sqrt{\frac{ (1-(1/\gamma)^{T})}{\lambda (1-(1/\gamma))}}. 
\end{align}
Summarizing the above results, with probability at least $(1 - \frac{1}{T})(1 - 2 \exp(- \frac{d \varepsilon^{2}}{8}))$, the following holds.
\begin{align} \nonumber
    | \boldsymbol{z}_{a,t}^{\top} (\boldsymbol{\hat{\psi}}_{t} - \bar{\boldsymbol{\psi}}_{t})|
    &\stackrel{\text{$(a)$}}{\leq} {\Bigg (} \norm{\sum_{\tau = 1}^{t-1} \gamma^{-\tau} \boldsymbol{z}_{a_{\tau},\tau} \eta_{t} }_{\tilde{\boldsymbol{Z}}_{t}^{-1}}
    + \lambda \gamma^{-(t-1)} \norm{ \boldsymbol{\psi}_{t}^{\ast}}_{\tilde{\boldsymbol{Z}}_{t}^{-1}} \\ \nonumber
    &\hspace{25mm}+
    \norm{\sum_{\tau = 1}^{t-1} \gamma^{-\tau} 
    \boldsymbol{z}_{a_{\tau},\tau} ( \boldsymbol{x}_{a_{\tau},\tau}^{\top} \theta_{\tau}^{\ast}  - \boldsymbol{z}_{a_{\tau},\tau}^{\top} \boldsymbol{\psi}_{t}^{\ast}  )}_{\tilde{\boldsymbol{Z}}_{t}^{-1}} {\Bigg )} \norm{\boldsymbol{z}_{a,t}}_{\boldsymbol{U}_{t}^{-1}} \\ \nonumber
    &\stackrel{\text{$(b)$}}{\leq} {\Bigg(} R \sqrt{2\log{T} + d\log{\left(1 + \frac{L_{\boldsymbol{z}}^{2} (1-\gamma^{2T})}{\lambda d (1-\gamma^{2})}\right)} } \\ \nonumber
    &\hspace{25mm}+ \lambda^{\frac{1}{2}} L_{\boldsymbol{\psi}} + \varepsilon \gamma  \sqrt{\frac{ (1-(1/\gamma)^{T})}{\lambda (1-(1/\gamma))}} {\Bigg)} \norm{\boldsymbol{z}_{a,t}}_{\boldsymbol{U}_{t}^{-1}} \\
    &= \alpha s_{a,t},
\end{align}
where $(a)$ follows from (\ref{eq:lem1-1}) and $(b)$ follows from (\ref{eq:secondpart}), (\ref{eq:thirdpart}), and slight modification of Proposition 3 in \cite{Russac19:WLB} .

For the second event, given history $\mathcal{H}_{t-1}^{\prime}$ and using the linear invariant property of Gaussian distributions, we have $\boldsymbol{z}_{a,t}^{\top} \boldsymbol{\tilde{\psi}}_{t}  = \boldsymbol{z}_{a,t}^{\top} \boldsymbol{\hat{\psi}}_{t} + \boldsymbol{z}_{a,t}^{\top} \boldsymbol{Z}_{t}^{-1} \boldsymbol{\tilde{Z}}_{t}^{1/2} \boldsymbol{W}$ is equivalent to $\boldsymbol{z}_{a,t}^{\top} \boldsymbol{\tilde{\psi}}_{t}  = \boldsymbol{z}_{a,t}^{\top} \boldsymbol{\hat{\psi}}_{t} + \boldsymbol{Z}_{t, \boldsymbol{z}_{a,t}}  s_{a,t}$, where $\boldsymbol{\boldsymbol{Z}}_{t, \boldsymbol{z}_{a,t}} \sim \mathcal{N}(0, \xi^{2})$. Then, 
\begin{align} \nonumber
    \mathbbm{P}(E_{\boldsymbol{\tilde{\psi}}}(t)) 
    = \mathbbm{P}(|\boldsymbol{z}_{a,t}^{\top} \boldsymbol{\tilde{\psi}}_{t}- \boldsymbol{z}_{a,t}^{\top} \boldsymbol{\hat{\psi}}_{t} | \leq \beta s_{a,t},~\forall a \in \mathcal{A})
    &= \mathbbm{P}(|\boldsymbol{Z}_{t, \boldsymbol{z}_{a,t}}| s_{a,t} \leq \beta s_{a,t},~\forall a \in \mathcal{A}) \\ \nonumber
    &= \mathbbm{P}(|\boldsymbol{Z}_{t, \boldsymbol{z}_{a,t}}| \leq \beta,~\forall a \in \mathcal{A}) \\
    &\stackrel{\text{$(\ast)$}}{\geq} 1 - 1/T,
\end{align}
where $(\ast)$ follows from Lemma \ref{lem:anticoncentration}.
\end{proof}
\begin{lemma}
\label{lem:3}
Let $\xi^{2} = 14 \alpha^{2}$. Given $\mathcal{H}_{t-1}^{\prime}$,
\begin{align}
\label{eq:lem3-hyp1}
   \mathbbm{P}(E_{\ast}(t)) \geq \frac{e^{-1/4}}{8 \sqrt{\pi}}.
\end{align}
\end{lemma}
\begin{proof}
The proof is inspired by \cite{Baekjin20:REF} and the difference here is that we are working in the low-dimensional space $\mathbbm{R}^{d}$ instead of the original space $\mathbbm{R}^{n}$. Similar to the proof of (\ref{eq:lem2-hyp2}) in Lemma \ref{lem:2}, we observe that $\boldsymbol{z}_{a,t}^{\top} \boldsymbol{\tilde{\psi}}_{t}  = \boldsymbol{z}_{a,t}^{\top} \boldsymbol{\hat{\psi}}_{t} + \boldsymbol{z}_{a,t}^{\top} \boldsymbol{Z}_{t}^{-1} \boldsymbol{\tilde{Z}}_{t}^{1/2} \boldsymbol{W}$ is equivalent to $\boldsymbol{z}_{a,t}^{\top} \boldsymbol{\tilde{\psi}}_{t}  = \boldsymbol{z}_{a,t}^{\top} \boldsymbol{\hat{\psi}}_{t} + \boldsymbol{Z}_{t, \boldsymbol{z}_{a,t}}  s_{a,t}$, where $\boldsymbol{\boldsymbol{Z}}_{t, \boldsymbol{z}_{a,t}} \sim \mathcal{N}(0, \xi^{2})$. Therefore,
\begin{align} 
    \mathbbm{P}(
    \boldsymbol{z}_{a_{t}^{\ast},t}^{\top} \boldsymbol{\tilde{\psi}}_{t} - \boldsymbol{z}_{a_{t}^{\ast},t}^{\top} \boldsymbol{\hat{\psi}}_{t} > \alpha s_{a_{t}^{\ast},t}) 
    = \mathbbm{P}(\boldsymbol{Z}_{t, \boldsymbol{z}_{a_{t}^{\ast},t}} > \alpha) 
    \stackrel{\text{$(\ast)$}}{\geq} \frac{e^{-1/4}}{8 \sqrt{\pi}},
\end{align}
where $(\ast)$ follows from Lemma \ref{lem:anticoncentration}.
\end{proof}
\begin{lemma}
\label{lem:4}
Let $\lambda \geq 1$. Assume $\alpha, \beta \geq 1$ such that $\mathbbm{P}(E_{\boldsymbol{\hat{\psi}}}) \geq 1 - p_{1}$, $\mathbbm{P}(E_{\boldsymbol{\tilde{\psi}}}(t)) \geq 1 - p_{2}$, and $\mathbbm{P}(E_{\ast}(t)) \geq p_{3}$, where $p_{1} = 2\exp(- \frac{d \varepsilon^{2}}{8}) + \frac{1}{T}(1-2\exp(- \frac{d \varepsilon^{2}}{8}))$, $p_{2} = \frac{1}{T}$, and $p_{3} = \frac{e^{-1/4}}{8 \sqrt{\pi}}$. Then, 
%
\begin{align}
   \mathbbm{E}[ \Delta_{a_{t}}(t)] 
   &\leq (\alpha + \beta) (1 + \frac{2}{p_{3} - p_{2}}) \mathbbm{E}_{t}[s_{a_{t},t}] + 2 L_{\boldsymbol{\psi}} L_{\boldsymbol{z}} p_{2}.
\end{align}
%
\end{lemma}
\begin{proof}
The proof is inspired by \cite{Baekjin20:REF}. The difference here is that we are working in the low-dimensional space $\mathbbm{R}^{d}$ instead of the original space $\mathbbm{R}^{n}$.

Let $\overline{a}_{t}$ denote the unsaturated arm with the smallest $s_{a,t}$. Formally,
\begin{align}
\label{eq:leastuncertain}
    \overline{a}_{t} = \argminA_{a \in \overline{C}(t)} s_{a,t}.
\end{align}
Note that $\overline{a}_{t}$ is fixed since $\overline{C}(t)$ and $s_{a,t}$, $\forall a \in \mathcal{A}$, are fixed for a given $\mathcal{H}_{t-1}^{\prime}$. Based on the definition of the optimal arm $a_{t}^{\ast}$, we know that $a_{t}^{\ast} \in \overline{C}(t)$.
When both the Events $E_{\boldsymbol{\hat{\psi}}}$ and $E_{\boldsymbol{\tilde{\psi}}}(t)$ are true, we have
\begin{align} \nonumber
    \Delta_{a_{t}}(t) 
    = \Delta_{\overline{a}_{t}}(t) + \boldsymbol{z}_{\overline{a}_{t},t}^{\top} \boldsymbol{\bar{\psi}}_{t} -  \boldsymbol{z}_{a_{t},t}^{\top} \boldsymbol{\bar{\psi}}_{t}
    &\stackrel{\text{$(a)$}}{\leq} \Delta_{\overline{a}_{t}}(t) + ( \tilde{r}_{\overline{a}_{t},t} - \tilde{r}_{a_{t},t}) + ( \gamma s_{\overline{a}_{t},t} + \gamma s_{a_{t},t}) \\ \nonumber
    &\stackrel{\text{$(b)$}}{\leq} \Delta_{\overline{a}_{t}}(t) + ( \gamma s_{\overline{a}_{t},t} + \gamma s_{a_{t},t}) \\ \label{eq:lem5-2}
    &\stackrel{\text{$(c)$}}{\leq} 2 \gamma s_{\overline{a}_{t},t} + \gamma s_{a_{t},t},
\end{align}
where $(a)$ holds since both $E_{\boldsymbol{\hat{\psi}}}$ and $E_{\boldsymbol{\tilde{\psi}}}(t)$ are true. Moreover, $(b)$ follows from the fact that at time $t$ we have $\tilde{r}_{a_{t},t} \geq \tilde{r}_{a,t}$, $\forall a$. Finally, $(c)$ is concluded from the definition of unsaturated arms.

Let $I(E)$ represent the indicator function which is equal to $1$ if the Event $E$ happens, and is $0$ otherwise. Therefore,
\begin{align} \nonumber
    \mathbbm{E}_{t}[\Delta_{a_{t}}(t)]
    &= \mathbbm{E}_{t}[\Delta_{a_{t}}(t) I\{E_{\boldsymbol{\tilde{\psi}}}(t)\} ] + \mathbbm{E}_{t}[\Delta_{a_{t}}(t)  I\{\overline{E_{\boldsymbol{\tilde{\psi}}}(t)}\}]
    \\ \nonumber
    &\stackrel{\text{$(a)$}}{\leq} 2 \gamma s_{\overline{a}_{t},t} + \gamma \mathbbm{E}_{t}[s_{a_{t},t}] + 2 L_{\boldsymbol{\psi}} L_{\boldsymbol{z}} \mathbbm{P}_{t}( \overline{E_{\boldsymbol{\tilde{\psi}}}(t)}) \\ \nonumber
    &\stackrel{\text{$(b)$}}{\leq} 2 \gamma \frac{\mathbbm{E}_{t}[s_{a_{t},t}]}{\mathbbm{P}_{t}(a_{t} \in \overline{C}(t))} + \gamma \mathbbm{E}_{t}[s_{a_{t},t}] + 2 L_{\boldsymbol{\psi}} L_{\boldsymbol{z}} p_{2} \\ \nonumber
    &=  \gamma( 1 + \frac{2}{\mathbbm{P}_{t}(a_{t} \in \overline{C}(t))}) \mathbbm{E}_{t}[s_{a_{t},t}] + 2 L_{\boldsymbol{\psi}} L_{\boldsymbol{z}} p_{2}, \\ 
    &\stackrel{\text{$(c)$}}{\leq} \gamma( 1 + \frac{2}{p_{3} - p_{2}}) \mathbbm{E}_{t}[s_{a_{t},t}] + 2 L_{\boldsymbol{\psi}} L_{\boldsymbol{z}} p_{2}, 
\end{align}
where $(a)$ follows from (\ref{eq:lem5-2}). Moreover, $(b)$ holds since
%
\begin{align} 
\label{eq:lem5-1}
    \mathbbm{E}_{t}[s_{a_{t},t} | \mathcal{F}_{t-1}^{\prime}] 
    \geq \mathbbm{E}_{t}[s_{a_{t},t} | a_{t} \in \overline{C}(t)] \mathbbm{P}_{t}(a_{t} \in \overline{C}(t))  
    \geq s_{\overline{a}_{t},t} \mathbbm{P}_{t}(a_{t} \in \overline{C}(t)),
\end{align}
where the last inequality follows from (\ref{eq:leastuncertain}).
Further, $(c)$ holds since when $E_{\boldsymbol{\hat{\psi}}}$ is true, we have
\begin{align} \nonumber
    \mathbbm{P}_{t}(a_{t} \in \overline{C}(t))
    \geq \mathbbm{P}_{t}(\exists a \in \overline{C}(t) ~\textup{s.t.}~ \tilde{r}_{a,t} > \max_{a' \in C(t)} \tilde{r}_{a',t})
    &\geq \mathbbm{P}_{t}(\tilde{r}_{a_{t}^{\ast},t} > \max_{a' \in C(t)} \tilde{r}_{a',t}) \\ \nonumber
    &\geq \mathbbm{P}_{t}(\tilde{r}_{a_{t}^{\ast},t} > \max_{a' \in C(t)} \tilde{r}_{a',t}, E_{\boldsymbol{\tilde{\psi}}}(t)) \\ \nonumber
    &\stackrel{\text{$(\ast)$}}{\geq} \mathbbm{P}_{t}(\tilde{r}_{a_{t}^{\ast},t} >  \boldsymbol{z}_{a_{t}^{\ast},t}^{\top} \boldsymbol{\bar{\psi}}_{t}, E_{\boldsymbol{\tilde{\psi}}}(t)) \\ \nonumber
    &\stackrel{\text{$(\ast\ast)$}}{\geq} \mathbbm{P}_{t}(\tilde{r}_{a_{t}^{\ast},t} > \boldsymbol{z}_{a_{t}^{\ast},t}^{\top} \boldsymbol{\bar{\psi}}_{t}) - \mathbbm{P}( \overline{E_{\boldsymbol{\tilde{\psi}}}(t)}) \\
    &\geq p_{3} - p_{2},
\end{align}
where $(\ast)$ holds since for any $a \in C(t)$,
\begin{align}
   \tilde{r}_{a,t} \leq \boldsymbol{z}_{a,t}^{\top} \bar{\boldsymbol{\psi}}_{t} + \gamma s_{a,t} \leq  \boldsymbol{z}_{a_{t}^{\ast},t}^{\top} \bar{\boldsymbol{\psi}}_{t},
\end{align}
where the last inequality follows from the definition of saturated arms. Moreover, $(\ast\ast)$ follows from (\ref{eq:lem2-hyp2}) and (\ref{eq:lem3-hyp1}).
\end{proof}
\subsubsection{Proof of Theorem \ref{thm:regbound}} 
\label{app:TheoremOneProof}
\begin{proof}
We start by decomposing the expected dynamic regret as follows. With probability $1 - 2 \exp(- \frac{d \varepsilon^{2}}{8})$, the following holds. 
\begin{align} \nonumber
\label{eq:decomposedregret}
    \mathbbm{E}[\mathcal{R}_{T}] 
    &= \sum_{t=1}^{T} \mathbbm{E}[\boldsymbol{x}_{a_{t}^{\ast}, t}^{\top} \boldsymbol{\theta}_{t}^{\ast} - \boldsymbol{x}_{a_{t}, t}^{\top} \boldsymbol{\theta}_{t}^{\ast}] \\ \nonumber 
    &\stackrel{\text{($\ast$)}}{\leq}
    \sum_{t=1}^{T} \mathbbm{E}[\boldsymbol{z}_{a_{t}^{\ast}, t}^{\top} \boldsymbol{\psi}_{t}^{\ast} - \boldsymbol{z}_{a_{t}, t}^{\top} \boldsymbol{\psi}_{t}^{\ast}] +
    2 T \varepsilon 
    \\ \nonumber
    &= \sum_{t=1}^{T} \mathbbm{E}[\boldsymbol{z}_{a_{t}^{\ast}, t}^{\top} \boldsymbol{\psi}_{t}^{\ast} - \boldsymbol{z}_{a_{t}, t}^{\top} \boldsymbol{\bar{\psi}}_{t} + \boldsymbol{z}_{a_{t}, t}^{\top} \boldsymbol{\bar{\psi}}_{t} 
    - \boldsymbol{z}_{a_{t}, t}^{\top} \boldsymbol{\psi}_{t}^{\ast}
    + \boldsymbol{z}_{a_{t}^{\ast}, t}^{\top} \boldsymbol{\bar{\psi}}_{t} - \boldsymbol{z}_{a_{t}^{\ast}, t}^{\top} \boldsymbol{\bar{\psi}}_{t}] 
    +2 T \varepsilon 
    \\ \nonumber
    &\leq \sum_{t=1}^{T} (\mathbbm{E}[\boldsymbol{z}_{a_{t}^{\ast}, t}^{\top} \boldsymbol{\bar{\psi}}_{t} - \boldsymbol{z}_{a_{t}, t}^{\top} \boldsymbol{\bar{\psi}}_{t}] 
    + \sum_{t=1}^{T} \mathbbm{E}[\langle \boldsymbol{z}_{a_{t}^{\ast}, t} - \boldsymbol{z}_{a_{t}, t}, \boldsymbol{\psi}_{t}^{\ast} - \boldsymbol{\bar{\psi}}_{t} \rangle]
    +2 T \varepsilon 
    \\
    &\leq \sum_{t=1}^{T} \mathbbm{E}[\boldsymbol{z}_{a_{t}^{\ast}, t}^{\top} \boldsymbol{\bar{\psi}}_{t} - \boldsymbol{z}_{a_{t}, t}^{\top} \boldsymbol{\bar{\psi}}_{t}] 
    + 2 L_{\boldsymbol{z}} \sum_{t=1}^{T} \norm{\boldsymbol{\psi}_{t}^{\ast} - \boldsymbol{\bar{\psi}}_{t}}_{2} +
    2 T \varepsilon, 
\end{align}
where $(\ast)$ follows from (\ref{eq:orgvsred1}) and (\ref{eq:orgvsred2}). Let $\zeta = 2d\log(\frac{1}{\gamma}) + 2 \frac{d}{T} \log(1 + \frac{L_{\boldsymbol{z}}^{2}}{d\lambda(1-\gamma)})$. The first term in (\ref{eq:decomposedregret}) is bounded as 
\begin{align} \nonumber
    \sum_{t=1}^{T} \mathbbm{E}[\boldsymbol{z}_{a_{t}^{\ast}, t}^{\top} \boldsymbol{\bar{\psi}}_{t} 
    - \boldsymbol{z}_{a_{t}, t}^{\top} \boldsymbol{\bar{\psi}}_{t}] 
    &\leq \sum_{t=d+1}^{T} \mathbbm{E}[\langle \boldsymbol{z}_{a_{t}^{\ast}, t} - \boldsymbol{z}_{a_{t}, t}, \boldsymbol{\bar{\psi}}_{t} \rangle I\{E_{\boldsymbol{\hat{\psi}}}\}] 
    + 2 L_{\boldsymbol{\psi}} L_{\boldsymbol{z}} T \mathbbm{P}(\overline{E_{\boldsymbol{\hat{\psi}}}}) + d \\ \nonumber
    &\stackrel{\text{($a$)}}{\leq} (\alpha + \beta) \sum_{t = 1}^{T} (1 + \frac{2}{p_{3} - p_{2}}) \mathbbm{E}_{t}[s_{a_{t},t}] 
    + 2 L_{\boldsymbol{\psi}} L_{\boldsymbol{z}} T (p_{1} + p_{2}) + d \\ \nonumber
    &\stackrel{\text{($b$)}}{\leq} (\alpha + \beta) \sum_{t = 1}^{T} (1 + \frac{2}{p_{3} - p_{2}}) \mathbbm{E}_{t}[\min\{1, s_{a_{t},t}\}]
    + 2 L_{\boldsymbol{\psi}} L_{\boldsymbol{z}} T (p_{1} + p_{2}) + d \\ 
    &\stackrel{\text{($c$)}}{\leq} (\alpha + \beta) (1 + \frac{2}{p_{3} - p_{2}}) \sqrt{\zeta T} + 2 L_{\boldsymbol{\psi}} L_{\boldsymbol{z}} T(p_{1} + p_{2}) + d,
\end{align}
where $(a)$ follows from Lemma \ref{lem:4}. $(b)$ holds due to the fact that the expected dynamic regret is upper bounded by $2T$ and $\alpha+\beta \geq 2$. Moreover, $(c)$ follows from Corollary 4 in \cite{Russac19:WLB}.

The second term in (\ref{eq:decomposedregret}) can be bounded as follows. For any integer $D > 0$, 
\begin{align} \nonumber
    2 L_{\boldsymbol{z}} \sum_{t=1}^{T} \norm{\boldsymbol{\psi}_{t}^{\ast} - \boldsymbol{\bar{\psi}}_{t}}_{2} 
    &= 2 L_{\boldsymbol{z}} \sum_{t=1}^{T} \norm{\boldsymbol{Z}_{t}^{-1} \sum_{\tau = 1}^{t-1} \gamma^{-\tau} \boldsymbol{z}_{a_{\tau}, \tau} \boldsymbol{z}_{a_{\tau}, \tau}^{\top} (\boldsymbol{\psi}_{\tau}^{\ast} - \boldsymbol{\psi}_{t}^{\ast})}_{2} \\ \nonumber
    &\leq 2 L_{\boldsymbol{z}} \sum_{t=1}^{T} \norm{\boldsymbol{Z}_{t}^{-1} \sum_{\tau = t-D}^{t-1} \gamma^{-\tau} \boldsymbol{z}_{a_{\tau}, \tau} \boldsymbol{z}_{a_{\tau}, \tau}^{\top} (\boldsymbol{\psi}_{\tau}^{\ast} - \boldsymbol{\psi}_{t}^{\ast})}_{2} \\ \nonumber
    &\hspace{20mm}+ 2 L_{\boldsymbol{z}} \sum_{t=1}^{T} \norm{\boldsymbol{Z}_{t}^{-1} \sum_{\tau = 1}^{t-D-1} \gamma^{-\tau} \boldsymbol{z}_{a_{\tau}, \tau} \boldsymbol{z}_{a_{\tau}, \tau}^{\top} (\boldsymbol{\psi}_{\tau}^{\ast} - \boldsymbol{\psi}_{t}^{\ast})}_{2} \\ \nonumber
    &\stackrel{\text{$(a)$}}{\leq} 2 L_{\boldsymbol{z}} 
    \sum_{t=1}^{T} \sum_{p=t-D}^{t-1} \norm{\boldsymbol{Z}_{t}^{-1} \sum_{\tau = t-D}^{p} \gamma^{-\tau} \boldsymbol{z}_{a_{\tau}, \tau} \boldsymbol{z}_{a_{\tau}, \tau}^{\top} (\boldsymbol{\psi}_{p}^{\ast} - \boldsymbol{\psi}_{p+1}^{\ast})}_{2} \\ \nonumber
    &\hspace{20mm}+ L_{\boldsymbol{z}} \sum_{t=1}^{T} \frac{2}{\lambda} \norm{\sum_{\tau = 1}^{t-D-1} \gamma^{t-\tau-1} \boldsymbol{z}_{a_{\tau}, \tau} \boldsymbol{z}_{a_{\tau}, \tau}^{\top} (\boldsymbol{\psi}_{\tau}^{\ast} - \boldsymbol{\psi}_{t}^{\ast})}_{2} \\ \nonumber
    &\stackrel{\text{$(b)$}}{\leq} 2 L_{\boldsymbol{z}} \sum_{t=1}^{T} \sum_{p=t-D}^{t-1} \norm{\boldsymbol{\psi}_{p}^{\ast} - \boldsymbol{\psi}_{p+1}^{\ast}}_{2} + \frac{4 L_{\boldsymbol{z}}^{3} L_{\boldsymbol{\psi}}}{\lambda} \frac{\gamma^{D}}{1-\gamma} T \\
    &\leq 2 L_{\boldsymbol{z}} D B_{T} + \frac{4 L_{\boldsymbol{z}}^{3} L_{\boldsymbol{\psi}}}{\lambda} \frac{\gamma^{D}}{1-\gamma} T,
\end{align}
where $(a)$ follows by interchanging the order of summations and using $\boldsymbol{Z}_{t}^{-2} \preccurlyeq  (\frac{\gamma^{t-1}}{\lambda})^{2} \boldsymbol{I}_{d \times d}$. Moreover, $(b)$ follows from the fact that, for all $p$ such that $t-D \leq p \leq t-1$, we have $\mu_{\max}(\boldsymbol{Z}_{t}^{-1} \sum_{\tau = t-D}^{p} \gamma^{-\tau} \boldsymbol{z}_{a_{\tau}, \tau} \boldsymbol{z}_{a_{\tau}, \tau}^{\top}) \leq 1$. 

Combining all the above results, with probability $1 - 2 \exp(- \frac{d \varepsilon^{2}}{8})$, the expected dynamic regret is bounded as
\begin{align} \nonumber
    \mathbbm{E}[\mathcal{R}_{T}] 
    &\leq 
    (\alpha + \beta) (1 + \frac{2}{p_{3} - p_{2}}) \sqrt{\zeta T} + 2 L_{\boldsymbol{\psi}} L_{\boldsymbol{z}} T(p_{1} + p_{2}) \\
    &\hspace{50mm}+ d + 2 L_{\boldsymbol{z}} D B_{T} + \frac{4 L_{\boldsymbol{z}}^{3} L_{\boldsymbol{\psi}}}{\lambda} \frac{\gamma^{D}}{1-\gamma} T + 2 T \varepsilon . 
\end{align}
Therefore, by choosing $D = \frac{\log(T)}{1-\gamma}$, we conclude the proof.
\end{proof}
\subsection{Additional Information on Experimental Setup}
\label{app:AddInfo}
We tune the hyperparameters of each benchmark algorithm by performing a grid search using a user stream as the validation data for $30,000$ time steps. To that end, we ran each algorithm with hyperparameters taken from a grid for three repetitions and chose the parameter values that resulted in the highest averaged accumulated reward. The parameters of those benchmark algorithms that are designed for non-stationary environments, i.e., D-LinTS and D-LinTS-RP, are tuned with introducing change points at time steps $\{5000, 10000, 20000\}$. As mentioned before, DeepFM is not originally designed for the online learning setting. Thus, to make a fair comparison, we use the same change points in the validation dataset for tuning the parameters of the DeepFM algorithm. To tune the parameters of those benchmark algorithms that rely on dimensionality reduction, i.e., D-LinTS-RP and CBRAP, we additionally average the accumulated reward over the different number of reduced dimensions. Thus, for those algorithms, we chose one set of parameters for all reduced dimensions.

\begin{table}[t!]
\renewcommand{\arraystretch}{1.3}
\renewcommand{\tabcolsep}{1.3mm}
    \begin{center}
    \caption{Tuned hyperparameters of the algorithms in our experiments.}
    \label{Table:parameters}
    \resizebox{0.55\textwidth}{!}{
    \begin{tabular}{|c|c|c|}
        \hline
        \bfseries Dataset & \bfseries Policy & \bfseries Hyperparameters  \\       
        \hline 
        \multirow{10}{*}{MovieLens 10M} & D-LinTS & $\gamma = 0.99, a = 0.1$ \\ 
        \cline{2-3} 
        & LinTS & $\nu = 0.3$ \\
        \cline{2-3} 
        & D-LinTS-RP & $\gamma = 0.99, \xi = 0.1$ \\
        \cline{2-3} 
         & LinUCB & $\alpha = 0.6$ \\
        \cline{2-3} 
        & CBRAP & $\alpha = 0.6$ \\
        \cline{2-3} 
        & $\epsilon$-greedy & $\varepsilon = 0.001$ \\
        \cline{2-3} 
        &\multirow{4}{*}{DeepFM} & Activation function = ReLU, \\ 
        & & Dropout probability = 0.2, \\
        & & Learning rate = 0.01, \\
        & & Hidden units = [1024, 256, 128] \\
        \hline 
        \multirow{10}{*}{Jester} & D-LinTS & $\gamma = 0.99, a = 0.1$ \\ 
        \cline{2-3} 
        & LinTS & $\nu = 0.3$ \\
        \cline{2-3} 
        & D-LinTS-RP & $\gamma = 0.99, \xi = 0.1$ \\
        \cline{2-3} 
         & LinUCB & $\alpha = 0.3$ \\
        \cline{2-3} 
        & CBRAP & $\alpha = 0.3$ \\
        \cline{2-3} 
        & $\epsilon$-greedy & $\varepsilon = 0.05$ \\
        \cline{2-3} 
        &\multirow{4}{*}{DeepFM} & Activation function = ReLU, \\ 
        & & Dropout probability = 0.2, \\
        & & Learning rate = 0.01, \\
        & & Hidden units = [1024, 256, 128] \\
        \hline 
        \multirow{10}{*}{Amazon Books} & D-LinTS & $\gamma = 0.99, a = 0.1$ \\ 
        \cline{2-3} 
        & LinTS & $\nu = 0.1$ \\
        \cline{2-3} 
        & D-LinTS-RP & $\gamma = 0.99, \xi = 0.1$ \\
        \cline{2-3} 
         & LinUCB & $\alpha = 0.2$ \\
        \cline{2-3} 
        & CBRAP & $\alpha = 0.3$ \\
        \cline{2-3} 
        & $\epsilon$-greedy & $\varepsilon = 1e-06$ \\
        \cline{2-3} 
        &\multirow{4}{*}{DeepFM} & Activation function = ReLU, \\ 
        & & Dropout probability = 0.5, \\
        & & Learning rate = 0.001, \\
        & & Hidden units = [1024, 1024, 1024] \\
        \hline 
    \end{tabular}
    }
    \end{center}
\end{table}

We simultaneously tuned the parameters $\gamma$ and $a$ for D-LinTS by performing a grid search over the sets $\{0.9999, 0.999, 0.99, 0.9, 0.8, 0.7, 0.6\}$ and $\{0.1, 0.2, 0.3, 0.4, 0.5, 0.6, 0.7, 0.8, 0.9\}$, respectively. For LinTS, we used the grid $\{0.1, 0.2, 0.3, 0.4, 0.5, 0.6, 0.7, 0.8, 0.9\}$ to tune the hyperparameter $\nu$. For D-LinTS-RP, we chose the hyperparameters $\gamma$ and $\xi$ from the sets $\{0.9999, 0.999, 0.99, 0.9, 0.8, 0.7, 0.6\}$ and $\{0.1, 0.2, 0.3, 0.4, 0.5, 0.6, 0.7, 0.8, 0.9\}$, respectively. We set $\kappa^2 = \frac{1}{n}$ when generating the random projection matrix. For LinUCB, we chose the hyperparameter $\alpha$ from the set $\{0.1, 0.2, 0.3, 0.4, 0.5, 0.6, 0.7, 0.8, 0.9\}$. For CBRAP, we chose the hyperparameter $\alpha$ from the set $\{0.1, 0.2, 0.3, 0.4, 0.5, 0.6, 0.7, 0.8, 0.9\}$ and set $\kappa^2 = \frac{1}{n}$ when generating the random projection matrix. For $\epsilon$-greedy, we chose the hyperparameter $\epsilon$ from the set $\{0.000001, 0.000005, 0.00001, 0.00005, 0.0001, 0.0005, 0.001, 0.005, 0.01, 0.05, 0.1, 0.2, 0.3, 0.4, \newline 0.5, 0.6, 0.7, 0.8, 0.9\}$. For DeepFM, we followed the approach proposed in \cite{Guo17:DAF} and chose hyperparameters that influenced the performance of the model the most. Hence, we considered several options for each component and hyperparameter of the model. More precisely, we chose the possible activation functions from $\{\textup{ReLU}, \textup{Tanh}\}$, the possible dropout probability from $\{0.1, 0.2, 0.3, 0.4, 0.5\}$, the learning rate from $\{0.01, 0.001, 0.0001\}$, and the number of hidden units in each of the three layers in the deep part of the model from the set $\{128, 256, 512, 1024\}$. \textbf{Table \ref{Table:parameters}} summarizes the final selected parameters for our experiments with the three datasets.

\subsection{Additional Information and Results regarding the Numerical Experiments}
\label{app:AddExps}
\textbf{Table \ref{Table:results-full}} lists the average runtime, the average cumulative reward, and the average cumulative NDCG@5 of each policy corresponding to different datasets. For D-LinTS-RP and CBRAP, we list the results for different reduced dimensions. We reduce the context dimension to $5\%$, $10\%$, $20\%$, and $50\%$ of the original context dimension. All the policies are evaluated on a single compute node with 64 Intel Xeon Gold 6226R CPUs and 64G of RAM.
%
\begin{table*}[!t]
\renewcommand{\arraystretch}{1.2}
\renewcommand{\tabcolsep}{1.2mm}
    \begin{center}
    \caption{Comparison of cumulative reward, cumulative NDCG@5, and time consumption of different policies corresponding to different datasets and context dimensions. The reported values are averaged over five repetitions.}
    \label{Table:results-full}
    \resizebox{0.94\textwidth}{!}{
    \begin{tabular}{c|c|c|c|c|c}
        \hline
        \bfseries Dataset & \bfseries Policy & \bfseries Context Dimension & \bfseries Runtime (second) & \bfseries Cumulative Reward  & \bfseries Cumulative NDCG@5 \\
        \hline 
        \multirow{15}{*}{MovieLens 10M} & D-LinTS & 120 & 1739.8 & 74771.6 & 48499.1 \\ 
        \cline{2-6} 
        & LinTS & 120 & 1370.9 & 53814.0 & 33572.3 \\
        \cline{2-6} 
        & \multirow{5}{*}{D-LinTS-RP} & 6 & 1027.4 & 59379.8 & 33934.6 \\
        \cline{3-6} 
        &  & 12 & 1054.9 & 66628.6 & 37764.2 \\
        \cline{3-6} 
        & & 24 & 1109.2 & 70130.0 & 39509.2 \\
        \cline{3-6} 
        & & 60 & 1231.5 & 74294.6 & 44014.6 \\
        \cline{3-6} 
        & & 120 & 1753.4 & 74379.8 & 48048.1 \\
        \cline{2-6} 
        & \multirow{5}{*}{CBRAP} & 6 & 1647.7 & 38660.0 & 23687.6 \\
        \cline{3-6}
        & & 12 & 1645.8 & 42485.0 & 25662.7 \\
        \cline{3-6} 
        & & 24 & 1917.0 & 57314.0 & 32521.0 \\
        \cline{3-6} 
        & & 60 & 1989.3 & 59425.0 & 34382.5 \\
        \cline{3-6} 
        & & 120 & 2440.3 & 61445.0 & 36496.0\\
        \cline{2-6} 
         & DeepFM & 120 & 4174.4 & 25419.8 & 23624.2 \\
        \cline{2-6} 
        & $\epsilon$-greedy & -- & 686.7 & 33066.4 & 22866.9 \\
        \cline{2-6} 
        & Random & -- & 857.8 & 20683.4 & 20634.6 \\
        \hline 
        \multirow{15}{*}{Jester} & D-LinTS & 300 & 6800.4 & 44134.6 & 33895.4  \\
        \cline{2-6} 
        & LinTS & 300 & 2769.7 & 36695.2 & 30340.7 \\
        \cline{2-6} 
        & \multirow{5}{*}{D-LinTS-RP} 
        & 15 & 596.0 & 34681.8 & 25716.7  \\
        \cline{3-6} 
        &  & 30 & 438.4 & 41278.6 & 32077.9 \\
        \cline{3-6} 
        &  & 60 & 610.6 & 43382.8 & 32116.6 \\
        \cline{3-6} 
        &  & 150 & 1554.3 & 43693.2 & 33030.9 \\
        \cline{3-6} 
        &  & 300 & 6362.9 & 43747.8 & 34022.0 \\
        \cline{2-6} 
        & \multirow{5}{*}{CBRAP} 
        & 15 & 450.9 & 18159.0 & 20113.8  \\
        \cline{3-6} 
        &  & 30 & 475.8 & 19918.0 & 18023.0 \\
        \cline{3-6} 
        &  & 60 & 521.3 & 26171.0 & 22018.0 \\
        \cline{3-6} 
        &  & 150 & 673.6 & 25081.0 & 22004.3 \\
        \cline{3-6} 
        &  & 300 & 841.0 & 27740.8 & 23026.6 \\
        \cline{2-6} 
         & DeepFM & 300 & 3912.7 & 19275.0 & 22513.9 \\
        \cline{2-6}
        & $\epsilon$-greedy & -- & 223.7 & 29832.4 & 27273.1 \\
        \cline{2-6} 
        & Random & -- & 235.5 & 13929.0 & 18179.2 \\
        \hline 
        \multirow{15}{*}{Amazon Books} & D-LinTS & 200 & 2549.3 & 35018.2 & 12929.0 \\
        \cline{2-6} 
        & LinTS & 200 & 1315.6 & 9513.6 & 4594.4\\
        \cline{2-6} 
        & \multirow{5}{*}{D-LinTS-RP} 
        & 10 & 505.8 & 20989.6 & 8057.9  \\
        \cline{3-6} 
        &  & 20 & 537.8 & 26544.0 & 9694.7 \\
        \cline{3-6} 
        &  & 40 & 586.2 & 34118.0 & 12257.4 \\
        \cline{3-6} 
        &  & 100 & 973.2 & 34654.0 & 12603.2 \\
        \cline{3-6} 
        &  & 200 & 2568.8 & 34788.6 & 12995.2 \\
        \cline{2-6} 
        & \multirow{5}{*}{CBRAP} 
        & 10 & 742.2 & 1901.0 & 1894.8  \\
        \cline{3-6} 
        &  & 20 & 808.4 & 8618.0 & 3369.9 \\
        \cline{3-6} 
        &  & 40 & 833.8 & 12500.0 & 4948.5 \\
        \cline{3-6} 
        &  & 100 & 918.5 & 5779.0 & 3073.5 \\
        \cline{3-6} 
        &  & 200 & 1281.2 & 7549.0 & 3263.6 \\
        \cline{2-6}
         & DeepFM & 200 & 4377.9 & 5562.8 & 3402.3 \\
        \cline{2-6}
        & $\epsilon$-greedy & -- & 296.9 & 1102.0 & 1454.8 \\
        \cline{2-6} 
        & Random & -- & 361.2 & 1625.2 & 1625.4 \\
        \hline
    \end{tabular}
    }
    \end{center}
\end{table*}
\newpage
%
\bibliographystyle{IEEEbib}
\bibliography{references}
\end{document}